\documentclass{article}

\newif\ifsup\suptrue

\usepackage[nonatbib,final]{neurips_2019}
\usepackage[numbers]{natbib}
\usepackage[utf8]{inputenc} 
\usepackage[T1]{fontenc}    
\usepackage{hyperref}       
\usepackage{url}            
\usepackage{booktabs}       
\usepackage{amsfonts}       
\usepackage{nicefrac}       
\usepackage{microtype}      
\usepackage{amsmath}
\usepackage{amssymb}
\usepackage{wrapfig}
\usepackage{amsthm}
\usepackage{xcolor}
\usepackage{bm}
\usepackage{dsfont}
\usepackage{mathrsfs}
\usepackage[linesnumbered,ruled,vlined]{algorithm2e} 
\SetKw{KwInit}{Initialize}

\usepackage[textsize=tiny]{todonotes}
\setlength{\marginparwidth}{20ex}

\definecolor{dkblue}{cmyk}{1,.54,.04,.19} 
\hypersetup{
      bookmarks=true,         
      unicode=false,          
      pdftoolbar=true,        
      pdfmenubar=true,        
      pdffitwindow=false,     
      pdfstartview={FitH},    
      pdftitle={Bandits},    
      pdfauthor={},     
      pdfsubject={Bandits},   
      pdfcreator={pdflatex},   
      pdfproducer={Producer}, 
      pdfkeywords={bandits} {statistics} {machine learning}, 
      pdfnewwindow=true,      
      colorlinks=true,        
      linkcolor=dkblue,       
      citecolor=dkblue,       
      filecolor=dkblue,       
      urlcolor=dkblue,        
}
\usepackage[capitalize]{cleveref}
\usepackage{pgfplots}

\theoremstyle{plain}
\newtheorem{theorem}{Theorem}
\newtheorem{lemma}[theorem]{Lemma}

\newtheorem{corollary}[theorem]{Corollary}

\theoremstyle{definition}
\newtheorem{definition}[theorem]{Definition}
\newtheorem{example}[theorem]{Example}

\newtheorem{remark}[theorem]{Remark}
\theoremstyle{remark}

\renewcommand{\mid}{\,|\,}

\newcommand{\argmin}{\operatornamewithlimits{arg\,min}}
\newcommand{\esssup}{\operatornamewithlimits{ess\,sup}}

\newcommand{\R}{\mathbb{R}}
\newcommand{\E}{\mathbb{E}}

\newcommand{\cF}{\mathcal F}
\newcommand{\cG}{\mathcal G}

\newcommand{\cA}{\mathcal A}
\newcommand{\oA}{X}
\newcommand{\cL}{\mathcal L}
\newcommand{\cN}{\mathcal N}
\newcommand{\cO}{\mathcal O}
\newcommand{\ip}[1]{\langle #1 \rangle}
\newcommand{\one}[1]{\mathds{1}(#1)}

\newcommand{\cX}{\mathcal X}

\newcommand{\graph}{\cG}
\newcommand{\indN}{\graph_{ind}}

\newcommand{\norm}[1]{\Vert #1 \Vert}
\newcommand{\Reg}{\mathfrak{R}}
\newcommand{\BReg}{\mathfrak{BR}}

\newcommand{\KL}{\operatorname{D}}

\newcommand{\diam}{\operatorname{diam}}
\newcommand{\stab}{\operatorname{stab}}

\newcommand{\bbP}{\mathbb P}
\newcommand{\interior}{\operatorname{int}}
\newcommand{\co}{\operatorname{co}}
\newcommand{\dom}{\operatorname{dom}}
\newcommand{\diag}{\operatorname{diag}}
\newcommand{\sA}{\mathscr{A}}
\newcommand{\mdproj}{f}
\newcommand{\md}{g}

\let\epsilon\varepsilon

\title{Connections Between Mirror Descent, Thompson Sampling and the Information Ratio}

\author{%
  Julian Zimmert \\
  DeepMind, London/\\
  University of Copenhagen\\
  \texttt{zimmert@di.ku.dk} \\
  \And
  Tor Lattimore \\
  DeepMind, London \\
  \texttt{lattimore@google.com} \\
}

\begin{document}

\maketitle

\begin{abstract}
The information-theoretic analysis by \citeauthor{RV14b} \cite{RV14b} in combination with minimax duality has proved a powerful tool for the analysis of
online learning algorithms in full and partial information settings. In most applications there is a tantalising similarity 
to the classical analysis based on mirror descent. We make a formal connection, showing that the information-theoretic bounds in most applications can be derived
from existing techniques for online convex optimisation. Besides this, for $k$-armed adversarial bandits we provide an efficient algorithm with regret that matches
the best information-theoretic upper bound and improve best known regret guarantees for online linear optimisation
on $\ell_p$-balls and bandits with graph feedback.
\end{abstract}

\section{Introduction}

The combination of minimax duality and the information-theoretic machinery by \citeauthor{RV14b} \cite{RV14b} has proved a powerful tool in the analysis of online learning algorithms.
This has led to short and insightful analysis for $k$-armed bandits, linear bandits, convex bandits and partial monitoring, all improving on prior best known results.
The downside is that the approach is non-constructive. The application of minimax duality demonstrates the existence of an algorithm with a given bound in the adversarial setting, but provides
no way of constructing that algorithm.

The fundamental quantity in the information-theoretic analysis is the `information ratio' in round $t$, which informally is
\begin{align*}
\text{information ratio}_t = \frac{(\text{expected regret in round $t$})^2}{\text{expected information gain in round $t$}}\,,
\end{align*}
where the information gain is either measured using the mutual information \cite{RV14b} or a generalisation based on a Bregman divergence \cite{LS19pminfo}.
Proving the information ratio is small corresponds to showing that either the learner is suffering small regret in round $t$ or gaining information, which ultimately leads to a bound on the cumulative regret.
The aforementioned generalisation by \citeauthor{LS19pminfo} \cite{LS19pminfo} 
lead to a short analysis for $k$-armed adversarial bandits that is minimax optimal except for small constant factors.
The authors speculated that the new idea should lead to improved bounds for a range of online learning problems and suggested a number of applications, including bandits with
graph feedback \citep{ACDK15} and linear bandits on $\ell_p$-balls \citep{BCL17}.

We started to follow this plan, successfully improving existing minimax bounds for bandits with graph feedback and online linear optimisation for $\ell_p$-balls 
with full information (the bandit setting remains a mystery). 
Along the way, however, we noticed a striking connection between the analysis techniques for
bounding the information ratio and controlling the stability of online stochastic mirror descent (OSMD), which is a classical algorithm for online convex optimisation.
A connection was already hypothesised by \citeauthor{LS19pminfo} \cite{LS19pminfo}, who noticed a similarity between the bounds obtained.
Notably, why does using the negentropy potential in the information-theoretic analysis lead to almost identical bounds for $k$-armed bandits as Exp3? 
Why does this continue to hold with the Tsallis entropy and the INF strategy \cite{AB09}?

\paragraph{Contribution}
Our main contribution is a formal connection between the information-theoretic analysis and OSMD. Specifically, we show how tools for analysing OSMD 
can be applied to a modified version of Thompson sampling that uses the same sampling strategy as OSMD, but replaces the mirror descent update with a Bayesian update. 
This contribution is valuable for several reasons: (a) it explains the similarity between the information-theoretic and OSMD style analysis, 
(b) it allows for the transfer of techniques for OSMD to Bayesian regret analysis and 
(c) it opens the possibility of a constructive transfer of ideas from Bayesian regret analysis to the adversarial framework, as we illustrate in the next contribution.

A curiosity in the Bayesian analysis of adversarial $k$-armed bandits is that the resulting bound was always a factor of $2$ smaller than the corresponding bound for OSMD.
This was true in the original analysis \cite{RV14b} and its generalisation \cite{LS19pminfo}. Our new theorem entirely explains the difference, and indeed, allows us to improve
the bounds for OSMD. This leads to an efficient algorithm for adversarial $k$-armed bandits with regret $\Reg_n \leq \smash{\sqrt{2kn} + O(k)}$, matching the information-theoretic upper bound except
for small lower-order terms.

Finally, we improve the regret guarantees for two online learning problems. First, for bandits with graph feedback we improve the minimax regret in the `easy' setting by a $\smash{\log(n)}$ factor,
matching the lower bound up to a factor of $\smash{\log^{3/2}(k)}$. Second, for online linear optimisation over the $\ell_p$-balls we improve existing bounds by arbitrarily large constant factors.
At first we had proved these results using the information-theoretic tools and minimax duality, but here we present the unified view and consequentially the analysis also applies to OSMD for which
we have efficient algorithms. 

\paragraph{Related work}
The information-theoretic Bayesian regret analysis was introduced by \cite{RV14,RV14b,RV16}. The focus in these papers is on the analysis of Bayesian
algorithms in the stochastic setting, a line of work continued recently by \cite{DV18}. \cite{BDKP15} noticed that the stochastic assumption is not required and that the results continued to hold
in a Bayesian adversarial setting where the prior is over arbitrary sequences of losses, rather than over (parametric) distributions as is usual in Bayesian statistics. The idea to use minimax duality to derive minimax regret bounds is due to \cite{AAB09} and has been applied and generalised
by a number of authors \citep{BDKP15,GPS16,LS19pminfo,BS19}.
Mirror descent was developed by \cite{Nem79} and \cite{NY83} for optimization.
As far as we know its first application to bandits was by \cite{AHR08}, which precipitated a flood of papers as summarised in the books by \cite{BC12,LS19bandit-book}.
We work in the partial monitoring framework, which goes back to \cite{Rus99}. Most of the focus since then has been on classifying the growth of the regret on the horizon for
finite partial monitoring games \citep{CBLuSt06,FR12,ABPS13,BFPRS14,LS18pm}.  
Bandits with graph feedback are a special kind of partial monitoring problem and have been studied extensively \citep[and others]{ACDK15,CHK16,ACC17}, with a monograph on the subject by \cite{Val16}.
Online linear optimisation is an enormous subject by itself. We refer the reader to the books by \cite{CL06,Haz16}.

\paragraph{Notation}

\ifsup
The reader will find omitted proofs in the supplementary material.
\else
The reader will find omitted proofs in the appendix.
\fi
Let $[n] = \{1,2,\ldots,n\}$ and $B_p^d = \{x \in \R^d : \norm{x}_p \leq 1\}$ be the standard $\ell_p$-ball.
For positive definite $A$ we write $\norm{x}_A^2 = x^\top A x$.
Given a topological space $X$, let $\interior(X)$ be its interior and $\Delta(X)$ be the space of probability measures on $X$ with the Borel $\sigma$-algebra.
We write $X^{\circ} = \{y \in \R^d : \sup_{x \in X} |\ip{x, y}| \leq 1\}$ for the functional analysts polar and $\co(X)$ for the convex hull of $X$.
The domain of a convex function $F : \R^d \to \R \cup \{\infty\}$ is $\dom(F) = \{x : F(x) < \infty \}$.
For $x, y \in \dom(F)$ the Bregman divergence between $x$ and $y$ with respect to $F$ is $\KL_F(x, y) = F(x) - F(y) - \nabla F_{x-y}(y)$ where $\nabla_v F(x)$ is the directional
derivative of $F$ at $x$ in the direction $v$. The diameter of $X$ with respect to $F$ is $\diam_F(X) = \sup_{x,y \in X} F(x) - F(y)$.
We abuse notation by writing $\nabla^{-2} F(x) = (\nabla^2 F(x))^{-1}$.
For $x, y \in \R^d$ we let $[x,y] = \co(\{x, y\})$ be the convex hull of $x$ and $y$, which is the set of points on the chord between $x$ and $y$.

\paragraph{Linear partial monitoring}
Our results are most easily expressed in a linear version of the partial monitoring framework, which is the same as the standard adversarial linear bandit framework, but
with a different feedback structure.
Let $\cA$ be the action space and $\cL$ the loss space, which are subsets of $\R^d$ with $\cA$ compact. The convex hull of $\cA$ is $\cX = \co(\cA)$.
When $\cA$ is finite we let $k = |\cA|$. 
The signal function is a known function $\Phi : \cA \times \cL \to \Sigma$ for some observation space $\Sigma$.
An adversary and learner interact over $n$ rounds. First the adversary secretly chooses $(\ell_t)_{t=1}^n$ with $\ell_t \in \cL$ for all $t$.
In each round $t$ the learner samples an action $A_t \in \cA$ from a distribution depending on
observations $A_1, \Phi_1,\ldots,A_{t-1},\Phi_{t-1}$ where $\Phi_s = \Phi(A_s, \ell_s)$ is the observation in round $s$.
The regret of policy $\pi$ in environment $(\ell_t)_{t=1}^n$ is
\begin{align*}
\Reg_n(\pi, (\ell_t)_{t=1}^n) = \max_{a \in \cA} \E\left[\sum_{t=1}^n \ip{A_t - a, \ell_t}\right]\,,
\end{align*}
where the expectation is with respect to the randomness in the actions. 
The regret depends on a policy and the losses. The minimax regret is 
\begin{align*}
\Reg_n^* = \inf_\pi \sup_{(\ell_t)_{t=1}^n} \Reg_n(\pi, (\ell_t)_{t=1}^n) \,,
\end{align*}
where the infimum is over all policies and the supremum over all loss sequences in $\cL^n$. From here on the dependence of $\Reg_n$ on the policy and loss sequence is omitted.

\paragraph{Examples}
The standard $k$-armed bandit is recovered when $\cA = \{e_1,\ldots,e_k\}$, $\cL = [0,1]^k$ and $\Phi(a, \ell) = \ip{a, \ell} \in \Sigma = [0,1]$.
For linear bandits the set $\cA$ is an arbitrary compact set and $\cL$ is typically $\cA^{\circ}$.
Bandits with graph feedback have a richer signal function as we explain in \cref{sec:graph}.

\paragraph{Bayesian setting}
In the Bayesian setting the sequence of losses $(\ell_t)_{t=1}^n$ are sampled from a known prior probability measure $\nu$ on $\cL^n$ and subsequently the learner
interacts with the sampled losses as normal. 
The optimal action is now a random variable $A^* = \argmin_{a \in \cA} \sum_{t=1}^n \ip{a, \ell_t}$ and the Bayesian regret is
\begin{align*}
\BReg_n = \E\left[\sum_{t=1}^n \ip{A_t - A^*, \ell_t}\right] \,.
\end{align*}
Finally, define $\bbP_t(\cdot) = \bbP(\cdot \mid \cF_t)$ 
and $\E_t[\cdot] = \E[\,\cdot \mid \cF_t]$ with $\cF_t = \sigma(A_1,\Phi_1,\ldots,A_t,\Phi_t)$, $\Delta_t = \ip{A_t - A^*, \ell_t}$. A crucial piece of notation
is $X_t = \E_{t-1}[A_t] \in \cX$, which is the conditional expected action played in round $t$.

\section{Mirror descent, Thompson sampling and the information ratio}

\begin{wrapfigure}[9]{r}{5.5cm}
\vspace{-0.7cm}
\begin{algorithm}[H]
\caption{OSMD}
\label{alg:OMD}
\DontPrintSemicolon
\LinesNumberedHidden
\KwIn{$\sA = (P, E, F)$ and $\eta$}
\KwInit{$X_1 = \argmin_{a\in\cX}F(a)$} \;
\For{$t= 1,\ldots,n$}{
Sample $A_t\sim P_{X_t}$ and observe $\Phi_t$ \;
Construct: $\hat \ell_t = E(X_t, A_t, \Phi_t)$ \;
Update: $X_{t+1} = \mdproj_t(X_t, A_t)$
}
\end{algorithm}
\end{wrapfigure}
We now develop the connection between OSMD and the information-theoretic Bayesian regret analysis.
Specifically we show that instances of OSMD can be transformed into an algorithm similar to Thompson sampling (TS) for which the Bayesian regret can be bounded in the 
same way as the regret of the original algorithm. The similarity to TS is important. Any instance of OSMD with a uniform bound on the
adversarial regret enjoys the same bound on the Bayesian regret for any prior without modification. Our result has a different flavour because we prove a bound for a variant of OSMD that replaces
the mirror descent update with a Bayesian update.

OSMD is a modular algorithm that depends on defining three components: (1) A sampling scheme that determines how the algorithm explores, (2) a method
for estimating the unobserved loss vectors, and (3) a convex `potential' and learning rate that determines how the algorithm updates its iterates.
The following definition makes this more precise. 

\begin{definition}
An instance of OSMD is determined by a tuple $\sA = (P, F, E)$ and learning rate $\eta > 0$ such that
\begin{enumerate}
\item[(a)] The sampling scheme is a collection $P = \{P_x : x \in \cX\}$ of probability measures in $\Delta(\cA)$ such that $\E_{A \sim P_x}[A] = x$ for all $x \in \cX$.
\item[(b)] The potential is a Legendre function $F : \R^d \to \R \cup \{\infty\}$ with $\dom(F) \cap \cX \neq \emptyset$ and $\eta > 0$ is the learning rate. 
\item[(c)] The estimation function is $E : \cX \times \cA \times \Sigma \to \R^d$, which we assume satisfies $\E_{A \sim P_x}[E(x, A, \Phi(A, \ell))] = \ell$ for all $\ell \in \cL$ and $x \in \cX$.
\end{enumerate}
\end{definition}
The assumptions on the mean of $P_x$ and that $E$ is unbiased are often relaxed in minor ways, but for simplicity we maintain the strict definition.
For the remainder we fix $\sA = (P, F, E)$ and $\eta > 0$ and abbreviate
\begin{align*}
E_t(x, a) = E(x, a, \Phi(a, \ell_t)) \qquad \text{and} \qquad
\hat \ell_t = E(X_t, A_t, \Phi_t)\,.
\end{align*}
You should think of $E_t(x, a)$ as the estimated loss vector when the learner plays action $a$ while sampling from $P_x$
and $\smash{\hat \ell_t}$ as the realisation of this estimate in round $t$.
OSMD starts by initialising $X_1$ as the minimiser of $F$ constrained to $\cX$.
Subsequently it samples $A_t\sim P_{X_t}$ and updates
\begin{align*}
X_{t+1} = \argmin_{y \in \cX} \eta \ip{y, \hat \ell_t} + \KL_F(y, x)\,.
\end{align*}
A useful notation is to let $(\mdproj_t)_{t=1}^n$ and $(\md_t)_{t=1}^n$ be sequences of functions from $\cX \times \cA$ to $\R^d$ with
\begin{align*}
\mdproj_t(x, a) &= \argmin_{y \in \cX} \left(\eta \ip{y, E_t(x, a)} + \KL_F(y, x)\right) \quad \text{ and } \\
\md_t(x, a) &= \argmin_{y \in \interior(\dom(F))} \left(\eta \ip{y, E_t(x, a)} + \KL_F(y,x)\right) \,,
\end{align*}
which means that $X_{t+1} = \mdproj_t(X_t, A_t)$, while $\md_t$ is the same as $\mdproj_t$, but without the constraint to $\cX$.
The complete algorithm is summarised in \cref{alg:OMD}.
The next theorem is well known \citep[\S28]{LS19bandit-book}.

\begin{theorem}[\textsc{OSMD regret bound}]\label{thm:OMD}
The regret of OSMD satisfies
\begin{align*}
\Reg_n\leq \frac{\diam_F(\cX)}{\eta} + \frac{\eta}{2} \E\left[\sum_{t=1}^n \stab_t(X_t ; \eta)\right]\,, 
\end{align*}
where $\displaystyle \stab_t(x ; \eta) = \frac{2}{\eta} \E_{A \sim P_x}\left[\ip{x - \mdproj_t(x, A), E_t(x, A)} - \frac{\KL_F(\mdproj_t(x, A), x)}{\eta}\right]$.
\end{theorem}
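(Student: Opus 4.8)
The plan is to run the textbook online stochastic mirror descent argument: reduce the adversarial regret to a statement about the estimated losses $\hat\ell_t$, prove the one-step mirror descent inequality pathwise, telescope, and then take a conditional expectation so that the per-round remainder becomes exactly $\tfrac{\eta}{2}\stab_t(X_t;\eta)$.

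First I would replace the true losses by the estimates. Fix a comparator $a \in \cX$ (it suffices to consider $a\in\cX$ since $\cA\subseteq\cX$ and the losses are linear). Because $X_t$ is $\cF_{t-1}$-measurable, $A_t\sim P_{X_t}$ with $\E_{A\sim P_{X_t}}[A]=X_t$, and $\hat\ell_t=E_t(X_t,A_t)$ satisfies $\E[\hat\ell_t\mid\cF_{t-1}]=\ell_t$ by the unbiasedness assumption on $E$, we get $\E[\ip{A_t-a,\ell_t}\mid\cF_{t-1}]=\ip{X_t-a,\ell_t}=\E[\ip{X_t-a,\hat\ell_t}\mid\cF_{t-1}]$, hence $\Reg_n=\max_{a\in\cA}\E\bigl[\sum_{t=1}^n\ip{X_t-a,\hat\ell_t}\bigr]$, and it is enough to bound $\sum_{t=1}^n\ip{X_t-a,\hat\ell_t}$ for arbitrary fixed $a$.

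The heart of the proof is the one-step inequality. Since $F$ is Legendre the Bregman projection keeps $X_{t+1}=\mdproj_t(X_t,A_t)$ in $\interior(\dom(F))$, so $\nabla F$ is defined there and the first-order optimality condition for $X_{t+1}=\argmin_{y\in\cX}\bigl(\eta\ip{y,\hat\ell_t}+\KL_F(y,X_t)\bigr)$ reads $\ip{\eta\hat\ell_t+\nabla F(X_{t+1})-\nabla F(X_t),\,a-X_{t+1}}\ge0$ for all $a\in\cX$. Combining this with the three-point identity $\KL_F(a,X_t)-\KL_F(a,X_{t+1})-\KL_F(X_{t+1},X_t)=\ip{\nabla F(X_{t+1})-\nabla F(X_t),\,a-X_{t+1}}$ and splitting $\ip{\hat\ell_t,X_t-a}=\ip{\hat\ell_t,X_t-X_{t+1}}+\ip{\hat\ell_t,X_{t+1}-a}$ gives
\[
\ip{X_t-a,\hat\ell_t}\le\ip{X_t-X_{t+1},\hat\ell_t}-\frac{\KL_F(X_{t+1},X_t)}{\eta}+\frac{1}{\eta}\bigl(\KL_F(a,X_t)-\KL_F(a,X_{t+1})\bigr)\,.
\]
Summing over $t$, the last bracket telescopes to $\tfrac1\eta\bigl(\KL_F(a,X_1)-\KL_F(a,X_{n+1})\bigr)\le\tfrac1\eta\KL_F(a,X_1)$, and since $X_1=\argmin_{a\in\cX}F(a)$ we have $\ip{\nabla F(X_1),a-X_1}\ge0$, so $\KL_F(a,X_1)\le F(a)-F(X_1)\le\diam_F(\cX)$ (if $\diam_F(\cX)=\infty$ the claim is vacuous, so we may assume $\cX\subseteq\dom(F)$).

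Finally I would take expectations and condition on $\cF_{t-1}$: since $X_t$ is $\cF_{t-1}$-measurable, $A_t\sim P_{X_t}$, $\hat\ell_t=E_t(X_t,A_t)$ and $X_{t+1}=\mdproj_t(X_t,A_t)$, the conditional expectation of the per-round remainder is $\E_{A\sim P_{X_t}}\bigl[\ip{X_t-\mdproj_t(X_t,A),E_t(X_t,A)}-\KL_F(\mdproj_t(X_t,A),X_t)/\eta\bigr]=\tfrac{\eta}{2}\stab_t(X_t;\eta)$ by the definition of $\stab_t$. Taking the maximum over $a\in\cA$ (the right-hand side being independent of $a$) yields the stated bound. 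There is no deep obstacle here — this is the classical OSMD analysis — so the only care needed is technical: checking that the Legendre hypothesis keeps the iterates in the interior of $\dom(F)$ so that the optimality condition and the three-point identity are legitimate, and keeping the order of the expectation and the supremum over comparators straight.
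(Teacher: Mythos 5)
Your proof is correct and follows the standard textbook analysis of online stochastic mirror descent, which is precisely what the paper appeals to by stating the theorem without proof and citing \cite[\S28]{LS19bandit-book}. The reduction via unbiasedness of $\hat\ell_t$ and $\E_{A\sim P_{X_t}}[A]=X_t$, the one-step inequality from the first-order optimality condition combined with the three-point identity, the telescoping of the $\KL_F(a,\cdot)$ terms bounded via $\KL_F(a,X_1)\le F(a)-F(X_1)\le\diam_F(\cX)$, and the conditioning step that identifies the per-round remainder with $\tfrac{\eta}{2}\stab_t(X_t;\eta)$ are all exactly as they should be.
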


The random variable $\stab_t(X_t ; \eta)$ measures the stability of the algorithm relative to the learning rate and is usually almost surely bounded.
The diameter term depends on how fast the algorithm can move from the starting point to optimal, which is large when the learning rate is small.  
In this sense the learning rate is tuned to balance the stability of the algorithm and the requirement that $(X_t)$ can tend towards an optimal point.
Note that $\stab_t(x)$ depends on $P$, $E$, $F$, $\eta$ and the loss vector $\ell_t$, which means that in the Bayesian setting the stability function is random.
The next lemma is also known and is often useful for bounding the stability function. 

\begin{lemma}\label{lem:stab}
Suppose that $F$ is twice differentiable on $\interior(\dom(F))$, then
\begin{align*}
\stab_t(x ; \eta) &\leq \E_{A \sim P_x}\left[\sup_{z \in [x, \mdproj_t(x, A)]} \norm{E_t(x, A)}^2_{\nabla^{-2} F(z)}\right]\,.
\end{align*}
Furthermore, provided that $\md_t(x, a)$ exists for all $a$ in the support of $P_x$, then  
\begin{align*}
\stab_t(x ; \eta) 
\leq \E_{A \sim P_x}\left[\sup_{z \in [x,\md_t(x, A)]} \norm{E_t(x, A)}^2_{\nabla^{-2} F(z)}\right]\,. 
\end{align*}
\end{lemma}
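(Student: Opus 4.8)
The plan is to reduce both parts to a single pointwise estimate and then integrate over $A \sim P_x$. Fix $x \in \interior(\dom(F))$ and an action $a$, abbreviate $\ell = E_t(x,a)$, and note that directly from the definition
\begin{align*}
\stab_t(x;\eta) = \E_{A\sim P_x}\!\left[\frac{2}{\eta^2}\Big(\eta\ip{E_t(x,A),\, x - \mdproj_t(x,A)} - \KL_F(\mdproj_t(x,A),x)\Big)\right]\,.
\end{align*}
Hence it suffices to prove the ``master inequality''
\begin{align*}
\eta\ip{\ell,\, x - y} - \KL_F(y,x) \;\le\; \frac{\eta^2}{2}\,\sup_{z \in [x,y]}\norm{\ell}^2_{\nabla^{-2}F(z)} \qquad\text{for every } \ell \in \R^d,\ y \in \interior(\dom(F))\,, \tag{$\star$}
\end{align*}
together with the observation (for the second part) that replacing $\mdproj_t$ by $\md_t$ only increases the left-hand side.

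To prove $(\star)$, note that $[x,y] \subseteq \interior(\dom(F))$ by convexity, so along this segment $F$ is twice differentiable with $\nabla^2 F$ positive definite (the latter being implicit in the appearance of $\nabla^{-2}F$). Pick $z^* \in [x,y]$ minimising $z \mapsto \norm{y-x}^2_{\nabla^2 F(z)}$ over the segment and set $H = \nabla^2 F(z^*) \succ 0$. Taylor's theorem with integral remainder gives
\begin{align*}
\KL_F(y,x) = \int_0^1 (1-s)\,\norm{y-x}^2_{\nabla^2 F(x+s(y-x))}\,\mathrm{d}s \;\ge\; \tfrac12\,\norm{y-x}^2_H\,,
\end{align*}
while Cauchy--Schwarz in the $H$-inner product together with $2uv \le u^2+v^2$ give $\eta\ip{\ell,\,x-y} \le \eta\norm{\ell}_{H^{-1}}\norm{x-y}_H \le \tfrac{\eta^2}{2}\norm{\ell}^2_{H^{-1}} + \tfrac12\norm{x-y}^2_H$. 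Subtracting, the two quadratic terms cancel and $\eta\ip{\ell, x-y} - \KL_F(y,x) \le \tfrac{\eta^2}{2}\norm{\ell}^2_{\nabla^{-2}F(z^*)} \le \tfrac{\eta^2}{2}\sup_{z\in[x,y]}\norm{\ell}^2_{\nabla^{-2}F(z)}$, which is $(\star)$. (If the infimum defining $z^*$ is not attained, one argues with near-minimisers and passes to the limit.)

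The first claim follows by applying $(\star)$ with $y = \mdproj_t(x,A)$, which lies in $\interior(\dom(F))$ since $F$ is Legendre, and taking $\E_{A\sim P_x}$. For the second, when $\md_t(x,a)$ exists it is a stationary point of the convex map $y \mapsto \eta\ip{\ell,y}+\KL_F(y,x)$ in $\interior(\dom(F))$, hence its global minimiser over $\R^d$; since $\mdproj_t(x,a)$ is a feasible point of the same problem, $\eta\ip{\ell,\md_t(x,a)}+\KL_F(\md_t(x,a),x) \le \eta\ip{\ell,\mdproj_t(x,a)}+\KL_F(\mdproj_t(x,a),x)$, equivalently $\eta\ip{\ell,x-\mdproj_t(x,a)}-\KL_F(\mdproj_t(x,a),x) \le \eta\ip{\ell,x-\md_t(x,a)}-\KL_F(\md_t(x,a),x)$. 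Bounding the right-hand side by $(\star)$ with $y = \md_t(x,A)$ and taking $\E_{A\sim P_x}$ completes the proof.

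I expect the only genuine obstacle to be $(\star)$: because $\nabla^2 F$ varies along $[x,y]$ one cannot identify $\KL_F(y,x)$ with a single quadratic form, and the point is to evaluate everything at the $z^*$ minimising $\norm{y-x}^2_{\nabla^2 F(\cdot)}$ so that the Cauchy--Schwarz slack is exactly absorbed by the Bregman term. The remainder is bookkeeping: checking that the relevant segments lie in $\interior(\dom(F))$, that $\nabla^2 F$ is invertible there, and that the integral form of Taylor's theorem is licensed by the stated differentiability hypothesis (mild extra regularity, e.g. local boundedness of $\nabla^2 F$, suffices).
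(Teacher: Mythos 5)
Your proof is correct, but it takes a genuinely different route from the paper's. The paper cites an external result for the first inequality and proves only the second: it uses the first-order optimality condition $\eta E_t(x,a) = \nabla F(x) - \nabla F(\md_t(x,a))$ to rewrite the inner product, applies a Bregman three-point identity to reach $\frac{2}{\eta^2}\E[\KL_F(x,\md_t) - \KL_F(\mdproj_t,\md_t)] \le \frac{2}{\eta^2}\E[\KL_F(x,\md_t)]$, passes to the Fenchel dual $\KL_{F^*}(\nabla F(\md_t),\nabla F(x))$, and finally Taylor-expands $F^*$ along the dual segment $[\nabla F(x), \nabla F(x) - \eta E_t]$. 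Your master inequality $(\star)$ instead stays entirely on the primal side: a single Taylor expansion of $\KL_F(y,x)$ plus Cauchy--Schwarz and AM--GM, evaluated at the $z^* \in [x,y]$ minimising $\norm{y-x}^2_{\nabla^2 F(\cdot)}$ so that the quadratic slack is exactly absorbed. Both parts then fall out of one estimate: apply $(\star)$ at $y = \mdproj_t$ for the first, and observe that dropping the constraint replaces $\mdproj_t$ by the unconstrained minimiser $\md_t$ and therefore only increases $\eta\ip{\ell,x-y}-\KL_F(y,x)$, so applying $(\star)$ at $y = \md_t$ gives the second. Your version is more self-contained (no external citation, no Legendre duality, no dual-side Taylor step), and it cleanly sidesteps a wrinkle in the paper's final line: the paper's bound requires the point $\nabla F^*(z^*)$ to lie on the primal segment $[x,\md_t]$, but $\nabla F^*$ maps the dual segment to a curve through $x$ and $\md_t$ rather than the chord itself; your $z^*$ lies on $[x,y]$ by construction, so the containment is immediate. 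One small caveat worth noting is that the integral form of Taylor's theorem you invoke requires slightly more than pointwise twice differentiability (e.g.\ $\nabla^2 F$ locally bounded/integrable along the segment), but this is comparable to the regularity the paper implicitly relies on for its mean-value step.
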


\begin{wrapfigure}[8]{r}{5.5cm}
\vspace{-0.7cm}
\begin{algorithm}[H]
\caption{MTS}
\label{alg:ATS}
\DontPrintSemicolon
\LinesNumberedHidden
\KwIn{Prior $\nu$ and $P$} 
\KwInit{$X_1 = \E[A^*]$} \;
\For{$t= 1,\ldots,n$}{
Sample $A_t \sim P_{X_t}$ and observe $\Phi_t$ \; 
Update: $X_{t+1} = \E_{t-1}[A^*]$
}
\end{algorithm}
\end{wrapfigure}
\paragraph{Bayesian analysis}
Modified Thompson sampling (MTS) is a variant of TS summarised in \cref{alg:ATS} that depends on a prior distribution $\nu$ and a sampling scheme $P$.
The algorithm differs from \cref{alg:OMD} in the computation of $X_t$. Rather than using the mirror descent update, it 
uses the Bayesian expected optimal action conditioned on the observations. 
Expectations in this subsection are with respect to both the prior and the actions, which means that $(\ell_t)_{t=1}^n$ are randomly distributed according to $\nu$ and
consequently the functions $\mdproj_t$, $\md_t$ and $\stab_t$ are random. Our main theorem is the following bound on the Bayesian regret of MTS.

\begin{theorem}\label{thm:bayes}
\label{thm:ATS}
MTS satisfies $\displaystyle \BReg_n\leq \frac{\diam_F(\cX)}{\eta} + \frac{\eta}{2} \E\left[\sum_{t=1}^n \stab_t(X_t ; \eta)\right]$.
\end{theorem}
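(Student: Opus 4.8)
The plan is to reproduce the proof of the OSMD bound (\cref{thm:OMD}) but with the random optimal action $A^*$ playing the role of the comparator, and to use the one distinguishing feature of MTS — that its iterate is the posterior mean $X_{t+1} = \E_t[A^*]$ (with $X_1 = \E[A^*]$) — to recover the telescoping that comes for free in genuine mirror descent.

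\emph{Step 1 (reduce to estimated losses).} Since $A_t \sim P_{X_t}$ with $X_t$ being $\cF_{t-1}$-measurable and generated with fresh randomness, $A_t$ is conditionally independent of the loss sequence and of $A^*$ given $\cF_{t-1}$, and $\E_{t-1}[A_t] = X_t$. Combining this with $\E_{A\sim P_x}[A]=x$ and the unbiasedness of $E$ (conditioning additionally on the full loss sequence and on $A^*$ for the term involving $A^*$) yields the identity $\E_{t-1}[\ip{A_t - A^*,\ell_t}] = \E_{t-1}[\ip{X_t - A^*,\hat\ell_t}]$, hence $\BReg_n = \E[\sum_{t=1}^n \ip{X_t - A^*,\hat\ell_t}]$.

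\emph{Step 2 (one-step mirror-descent inequality against the random comparator).} Let $\tilde X_{t+1} = \mdproj_t(X_t, A_t)$ denote the \emph{OSMD} update, which is a function of $X_t$, $A_t$ and $\hat\ell_t$ and hence $\cF_t$-measurable. The first-order optimality condition for the constrained minimiser defining $\mdproj_t$, together with the three-point identity for Bregman divergences, gives for every $a \in \cX$, in particular for $a = A^*$,
\begin{align*}
\eta\ip{X_t - A^*,\hat\ell_t} \leq \eta\ip{X_t - \tilde X_{t+1},\hat\ell_t} - \KL_F(\tilde X_{t+1}, X_t) + \KL_F(A^*, X_t) - \KL_F(A^*, \tilde X_{t+1})\,.
\end{align*}
Taking $\E_{t-1}$ of the first two terms on the right and comparing with the definition of $\stab_t$ shows they equal $\tfrac{\eta^2}{2}\stab_t(X_t;\eta)$, i.e.\ the stability part of the bound. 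The one mismatch with OSMD is that the actual iterate of MTS is $X_{t+1} = \E_t[A^*]$ rather than $\tilde X_{t+1}$, so the terms $\KL_F(A^*, X_t) - \KL_F(A^*, \tilde X_{t+1})$ do not telescope on their own.

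\emph{Step 3 (close the gap via the Bregman variance identity).} The crux is that for any $\cF_t$-measurable $y$ one has $\E_t[\KL_F(A^*, y)] = \E_t[\KL_F(A^*, \E_t[A^*])] + \KL_F(\E_t[A^*], y) \geq \E_t[\KL_F(A^*, X_{t+1})]$, using $X_{t+1} = \E_t[A^*]$ and $\KL_F \geq 0$. Applying this with $y = \tilde X_{t+1}$ and summing,
\begin{align*}
\sum_{t=1}^n \E\big[\KL_F(A^*, X_t) - \KL_F(A^*, \tilde X_{t+1})\big] &\leq \sum_{t=1}^n \E\big[\KL_F(A^*, X_t) - \KL_F(A^*, X_{t+1})\big] \\
&= \E[\KL_F(A^*, X_1)] - \E[\KL_F(A^*, X_{n+1})] \leq \E[\KL_F(A^*, X_1)]\,,
\end{align*}
and, since $X_1 = \E[A^*]$, the same identity gives $\E[\KL_F(A^*, X_1)] = \E[F(A^*)] - F(\E[A^*]) \leq \diam_F(\cX)$. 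Assembling Steps 1–3 and dividing by $\eta$ yields the claim. I expect the real obstacle — and the conceptual point of the theorem — to be Step 3: recognising that the posterior-mean update is exactly what keeps the mirror-descent telescoping intact, which is what lets the OSMD machinery drive the Bayesian analysis. A secondary, purely technical concern, handled exactly as in the OSMD proof, is to check that $X_t$ and $\tilde X_{t+1}$ stay in $\interior(\dom F)$ so that all gradients, Bregman divergences and optimality conditions invoked above are well defined.
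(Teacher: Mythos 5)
Your proof is correct, and it reaches the paper's bound by a genuinely different route. Both arguments open with the same reduction to estimated losses; the divergence is in how the mirror-descent machinery is wired in. The paper first applies the tower rule to replace $A^*$ by the posterior mean, yielding $\E_{t-1}[\Delta_t] = \E_{t-1}[\ip{X_t - X_{t+1},\hat\ell_t}]$, and then uses only the bare argmin optimality of $\mdproj_t$ against the specific point $y = X_{t+1}$, leaving the single remainder $\tfrac1\eta\KL_F(X_{t+1},X_t)$; the sum of these is then controlled by the martingale property of $(X_t)$ together with \citep[Theorem 3]{LS19pminfo}, i.e.\ $\E[\sum_t \KL_F(X_{t+1},X_t)]\leq \E[F(X_{n+1})]-F(X_1)\leq\diam_F(\cX)$. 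You instead invoke the full three-point inequality with the random comparator $A^*$ itself, which leaves the remainder $\KL_F(A^*,X_t)-\KL_F(A^*,\tilde X_{t+1})$, and you close the gap by the Bregman bias--variance identity $\E_t[\KL_F(A^*,y)]=\E_t[\KL_F(A^*,X_{t+1})]+\KL_F(X_{t+1},y)$, which shows the posterior mean is precisely the $\cF_t$-measurable point that preserves telescoping. The two remainders agree in expectation up to the nonnegative slack $\E_{t-1}[\KL_F(X_{t+1},\tilde X_{t+1})]$, which you discard, so the final bound is identical; applying the same identity once more even recovers the cited lemma, since $\E[\KL_F(A^*,X_1)] = \E[F(A^*)] - F(X_1) = \E[\KL_F(A^*,X_{n+1})] + \E[\sum_t\KL_F(X_{t+1},X_t)]$. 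What you gain is a self-contained argument that makes the conceptual role of the posterior-mean update explicit; what the paper's route buys is a slightly lighter technical footprint, since argmin optimality avoids the first-order stationarity condition and hence does not require $\tilde X_{t+1}\in\interior(\dom F)$ (a condition you rightly flag). One bookkeeping point: in your Step 2 the first two terms on the right equal $\tfrac{\eta^2}{2}\stab_t(X_t;\eta)$ only after conditioning on $\ell_t$; taking $\E_{t-1}$ (which also averages over the random loss) gives $\tfrac{\eta^2}{2}\E_{t-1}[\stab_t(X_t;\eta)]$, consistent with the stated theorem.
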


\begin{remark}
The stability function depends on $\sA = (P, F, E)$ and $\eta$ while \cref{alg:ATS} only uses $P$.
In this sense \cref{thm:bayes} shows that MTS satisfies the given bound for all $E$, $F$ and $\eta$.
MTS is the same as TS when sampling from the posterior is the same as sampling from $P_{X_t}$.
A fundamental case where this always holds is when $\cA = \{e_1,\ldots,e_d\}$ because each $x \in \cX$ is uniquely represented as a linear combination of elements in $\cA$ and hence $P_x$ is unique. 
\end{remark}

\begin{proof}[Proof of \cref{thm:bayes}]
Beginning with the definition of the per-step regret,
\begin{align}
\E_{t-1}&\left[\Delta_t\right] = \ip {X_t,\E_{t-1}[\ell_t]} -\E_{t-1}\left[\ip{A^*,\ell_t }  \right] \nonumber\\
    &= \ip {X_t,\E_{t-1}[\hat\ell_t]} -\E_{t-1}\left[\ip{A^*,\hat\ell_t}  \right]\label{eq:Z1}\\
    &= \ip {X_t,\E_{t-1}[\hat\ell_t]} -\E_{t-1}\left[\ip{\E_{t-1}[A^* \mid A_t, \Phi_t],\hat\ell_t}  \right]\label{eq:Z2}\\
    &= \E_{t-1}\left[\ip {X_t-X_{t+1},\hat\ell_t}  \right]\label{eq:Z3}\\
    &\leq \E_{t-1}\left[\ip {X_t-\mdproj_t(X_t, A_t),\hat\ell_t} -\frac{1}{\eta}\KL_F(\mdproj_t(X_t, A_t), X_t) + \frac{1}{\eta}\KL_F(X_{t+1},X_t)  \right]  \label{eq:Z4} \\
    &\leq \E_{t-1}\left[\frac{\eta}{2} \stab_t(X_t ; \eta) + \frac{1}{\eta}\KL_F(X_{t+1}, X_t)\right] \,. \label{eq:Z5}
\end{align}
\cref{eq:Z1} uses that the loss estimators are unbiased.
\cref{eq:Z2} follows using the tower rule for conditional expectations and the fact that $\hat \ell_t$ is a measurable function of $X_t$, $A_t$ and $\Phi_t$ so that
\begin{align*}
\E_{t-1}[\ip{A^*, \hat \ell_t}]
= \E_{t-1}[\E_{t-1}[\ip{A^*, \hat \ell_t} \mid A_t, \Phi_t]]
= \E_{t-1}[\ip{\E_{t-1}[A^* \mid A_t, \Phi_t], \hat \ell_t}]
= \E_{t-1}[\ip{X_{t+1}, \hat \ell_t}]\,.
\end{align*}
\cref{eq:Z3} uses the definitions of $X_{t+1}$. 
\cref{eq:Z4} follows from the definition of $\mdproj_t$, which implies that 
\begin{align*}
\ip{\mdproj_t(X_t, A_t),\hat\ell_t} + \frac{1}{\eta} \KL_F(\mdproj_t(X_t, A_t),X_{t}) \leq \ip{X_{t+1},\hat\ell_t} + \frac{1}{\eta} \KL_F(X_{t+1},X_{t}) \,.
\end{align*}
Finally, \cref{eq:Z5} follows from the definition of $\stab_t$.
The proof is completed by summing over the per-step regret, noting that $(X_t)_{t=1}^n$ is a $(\cF_t)_t$-adapted martingale and by \cite[Theorem 3]{LS19pminfo},
\begin{align*}
    \E\left[\sum_{t=1}^n \KL_F(X_{t+1},X_t)\right] &\leq \E[F(X_{n+1})] - F(X_1) \leq \diam_F(\cX) \,. \qedhere
\end{align*}
\end{proof}

\paragraph{The stability coefficient}
The only difference between \cref{thm:OMD,thm:bayes} is the trajectory of $(X_t)_{t=1}^n$ and the randomness of the stability function.
In most analyses of OSMD the final bound is obtained via a uniform bound on $\stab_t(x ; \eta)$ that holds regardless of the losses 
and in this case the trajectory $X_t$ is irrelevant. This is formalised in the following definition and corollary.
Define the stability coefficients by
\begin{align*}
\stab(\sA ; \eta) = \sup_{x \in \cX} \max_{t \in [n]} \stab_t(x ; \eta) 
\qquad \text{and} \qquad 
\stab(\sA) = \sup_{\eta > 0} \stab(\sA ; \eta)\,.
\end{align*}

\begin{corollary}\label{cor:BOTH}
The regret of \cref{alg:OMD} for an appropriately tuned learning rate is bounded by
\begin{align*}
\Reg_n \leq \sqrt{2 \diam_F(\cX) \stab(\sA) n}\,.
\end{align*}
The Bayesian regret of \cref{alg:ATS} is bounded by $\displaystyle \BReg_n \leq \sqrt{2 \diam_F(\cX) \esssup(\stab(\sA))n }$.
\end{corollary}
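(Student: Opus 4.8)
The plan is to read both inequalities straight off \cref{thm:OMD,thm:bayes}, which already share the common form $\diam_F(\cX)/\eta + (\eta/2)\,\E[\sum_{t=1}^n \stab_t(X_t;\eta)]$, by replacing the trajectory-dependent stability terms with the uniform constant $\stab(\sA)$ and then optimising over the learning rate. There is no substantial obstacle here; the work is entirely bookkeeping, and the only point that requires care is making sure the $\eta$-dependence of $\stab$ is handled honestly.

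For \cref{alg:OMD} I would argue as follows. Since $X_t \in \cX$ for every $t$, the definition of $\stab(\sA;\eta)$ gives $\stab_t(X_t;\eta) \leq \stab(\sA;\eta) \leq \stab(\sA)$, where the last inequality is just the definition $\stab(\sA) = \sup_{\eta>0}\stab(\sA;\eta)$. Substituting into \cref{thm:OMD} yields $\Reg_n \leq \diam_F(\cX)/\eta + (\eta n/2)\,\stab(\sA)$ for every $\eta>0$. When $\diam_F(\cX)$ and $\stab(\sA)$ are both finite and strictly positive, choosing $\eta = \sqrt{2\diam_F(\cX)/(n\,\stab(\sA))}$ balances the two terms and gives $\Reg_n \leq \sqrt{2\diam_F(\cX)\stab(\sA) n}$; the degenerate cases where either quantity is $0$ or $\infty$ make the claimed bound either trivially true or vacuous. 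The subtlety worth flagging is that although $\stab(\sA;\eta)$ genuinely depends on $\eta$ through the maps $\mdproj_t$ and $\md_t$, the constant $\stab(\sA)$ dominates it simultaneously for all $\eta$, so the learning rate may be tuned against $\stab(\sA)$ without circularity.

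For \cref{alg:ATS} I expect the argument to be identical, the one difference being that the losses are drawn from the prior, so $\stab_t(X_t;\eta)$ — and hence $\stab(\sA;\eta)$ and $\stab(\sA)$ — are random variables. Here I would bound $\stab_t(X_t;\eta) \leq \stab(\sA;\eta) \leq \esssup(\stab(\sA))$ almost surely (the first inequality again using $X_t \in \cX$), take expectations to get $\E[\sum_{t=1}^n \stab_t(X_t;\eta)] \leq n\,\esssup(\stab(\sA))$, substitute into \cref{thm:bayes} to obtain $\BReg_n \leq \diam_F(\cX)/\eta + (\eta n/2)\,\esssup(\stab(\sA))$, and then optimise $\eta$ exactly as above to reach $\BReg_n \leq \sqrt{2\diam_F(\cX)\,\esssup(\stab(\sA))\,n}$. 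The only genuinely Bayesian point is that the essential supremum, rather than an almost-sure bound, is the correct object, since the prior may assign vanishing mass to loss sequences that momentarily inflate the stability function; because these occur with probability zero they do not affect the expectation. In short, the main (minor) obstacle is not a mathematical one but the need to state the $\eta$-uniformity of $\stab(\sA)$ and the essential-supremum convention carefully enough that the tuning step is rigorous.
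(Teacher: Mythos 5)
Your argument is exactly the paper's: bound $\stab_t(X_t;\eta)$ by the uniform coefficient $\stab(\sA)$ (respectively $\esssup(\stab(\sA))$ in the Bayesian case), substitute into \cref{thm:OMD} (respectively \cref{thm:bayes}), and optimise $\eta$ by balancing the two terms. Correct, and essentially identical to the paper's proof.
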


The essential supremum is needed because the stability coefficient depends on the losses $(\ell_t)_{t=1}^n$, which are random in the Bayesian setting.
Generally speaking, however, bounds on the stability coefficient are proven in a manner that is independent of the losses.

\begin{remark}\label{rem:BOTH}
Often $\stab(\sA ; \eta) \leq a + b\eta$ for constants $a, b \geq 0$ and $\stab(\sA) = \infty$. Nevertheless, the same argument shows that
the regret of \cref{alg:OMD} is bounded by
\begin{align*}
\Reg_n \leq \sqrt{2 a \diam_F(\cX) n} + \frac{b \diam_F(\cX)}{a}\,,
\end{align*}
and similarly for the Bayesian regret of \cref{alg:ATS}.
\end{remark}

\paragraph{Stability and the information ratio}
The generalised information-theoretic analysis by \cite{LS19pminfo} starts by assuming there exists a constant $\alpha > 0$ such that the following bound on the information ratio 
holds almost surely:
\begin{align}
\textrm{information ratio}_t = \E_{t-1}[\Delta_t]^2 \Big/ \E_{t-1}[\KL_F(X_{t+1}, X_t)] \leq \alpha \,. \label{eq:inf-rat}
\end{align}
Then \citep[Theorem 3]{LS19pminfo} shows that
\begin{align}
\BReg_n \leq \sqrt{\alpha n \diam_F(\cX)}\,. \label{eq:inf-bound}
\end{align}
The proof of \cref{thm:ATS} directly provides a bound on the information ratio in terms of the stability coefficient. 
To see this, notice that \cref{eq:Z4} holds for all measurable $\eta$ and let
\begin{align}
\eta = \sqrt{2 \E_{t-1}[\KL_F(X_{t+1}, X_t)] / \esssup(\stab(\sA))}\,.
\label{eq:adaptive-learning}
\end{align}
Then by \cref{eq:Z4} and the definition of $\stab(\sA)$ it follows that
\begin{align*}
\E_{t-1}[\Delta_t]^2 \Big/ \E_{t-1}[\KL_F(X_{t+1}, X_t)] \leq 2 \esssup(\stab(\sA)) \,\, \, a.s.\,.
\end{align*}
In other words, the usual methods for bounding the stability coefficient in the analysis of OSMD can be used to bound the information ratio in the information-theoretic analysis.

\begin{example}\label{example:bandit}
To make the abstraction more concrete, consider the $k$-armed bandit problem where
$\cL = [0,1]^k$ and $\cA = \{e_1,\ldots,e_k\}$. In this case there is a unique sampling scheme defined by $P_x(a) = \ip{x, a}$.
The standard loss estimation function is to use importance-weighting, which leads to
\begin{align}
E_t(x, a)_i = \ell_{ti}\one{a = e_i} \big/ x_i\,.
\label{eq:i-weight-simple}
\end{align}
A commonly used potential is the unnormalised negentropy $F(x) = \sum_{i=1}^k x_i \log(x_i) - x_i$ that satisfies $\nabla^{-2} F(x) = \diag(x)$.
The instance of OSMD resulting from these choices is called Exp3 for which an explicit form for $X_t$ is well known:
\[
X_{ti} = \exp\left(-\eta \textstyle\sum\nolimits_{s=1}^{t-1} \hat \ell_{si}\right)\Big/\left(\textstyle\sum\nolimits_{j=1}^k \exp\left(-\eta \textstyle\sum\nolimits_{s=1}^{t-1} \hat \ell_{sj}\right)\right)\,.
\]
A short calculation shows that $\md_t(x, a)_i = x_i \exp(-\eta \hat \ell_{ti}) \leq x_i$.
The stability function is bounded using the second part of \cref{lem:stab} by
\begin{align*}
&\stab_t(x ; \eta) 
\leq \E_{A \sim P_x}\left[\sup_{z \in [x,\md_t(x,A)]}\norm{E_t(x, A)}^2_{\nabla^{-2} F(z)}\right] \\
&\quad= \E_{A \sim P_x}\left[\sup_{z \in [x,\md_t(x,A)]} \sum_{i=1}^k z_{ti} \frac{\one{A = e_i} \ell_{ti}^2}{x_{ti}^2} \right] 
= \E_{A \sim P_x}\left[\frac{\one{A = e_i} \ell_{ti}^2}{x_{ti}} \right] 
\leq \sum_{i=1}^k \ell_{ti}^2 
\leq k \,.
\end{align*}
Finally, the diameter of the probability simplex $\cX$ with respect to the unnormalised negentropy is $\diam_F(\cX) = \log(k)$.
Applying \cref{thm:OMD} shows that the regret of OSMD and Bayesian regret of MTS satisfy 
\begin{align*}
\Reg_n \leq \sqrt{2nk \log(k)} \quad \text{(OSMD)} \quad\qquad \text{and} \quad\qquad \BReg_n \leq \sqrt{2nk \log(k)} \quad \text{(MTS)}\,.
\end{align*}
\end{example}

\begin{remark}
\cref{thm:OMD,thm:bayes} are vacuous when $\diam_F(\cX) = \infty$.
The most straightforward resolution is to restrict $X_t$ to a subset of $\cX$ on which the diameter is bounded and then control the additive error.
This idea also works in the Bayesian setting as described by \cite{LS19pminfo}. We omit a detailed discussion to avoid technicalities.
\end{remark}

\section{Bandits}

The best known bound on the minimax regret for $k$-armed bandits is $\Reg_n \leq \sqrt{2kn}$ by \cite{LS19pminfo}.
They let $F(x) = -2 \sum_{i=1}^k \sqrt{x_i}$ be the $1/2$-Tsallis entropy and prove that
\begin{align*}
\E_{t-1}[\Delta_t]^2 \Big/ \E_{t-1}[\KL_F(X_{t+1},X_t)] \leq \sqrt{k}\,.
\end{align*}
By Cauchy-Schwarz $\diam_F(\cX) \leq 2 \sqrt{k}$ and then \cref{eq:inf-bound} shows that $\BReg_n \leq \sqrt{2nk}$ for all priors $\nu$. 
Minimax duality is used to conclude that $\Reg_n^* \leq \sqrt{2kn}$.
Meanwhile, using the importance-weighted estimator in \cref{eq:i-weight-simple} leads to a bound on the stability coefficient of $\stab(\sA) \leq 2 \sqrt{k}$ and then
\cref{thm:OMD} yields a bound of $\Reg_n \leq \smash{\sqrt{8nk}}$.
\begin{wrapfigure}[17]{r}{6cm}
    \centering
    \def\svgwidth{6cm}
    \input{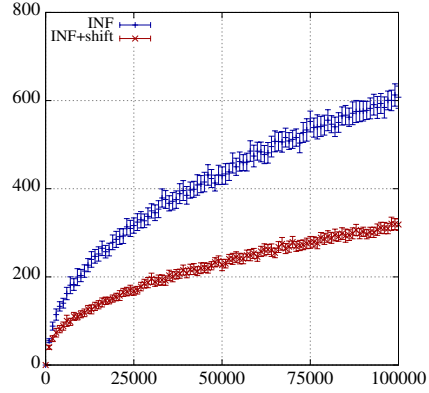}
    \caption{Comparison of INF with and without shifted loss estimators. $\eta$ is tuned to the horizon and all experiments use Bernoulli losses with $\E[\ell_{t}]=(0.45,0.55,0.55,0.55,0.55)^T$ ($k=5)$. We repeat the experiment 100 times with error bars indicating three standard deviations.
    The empirical result matches our theoretical improvement of a factor $2$.}
    \label{fig:bandit}
\end{wrapfigure}
The discrepancy between these methods is entirely explained by the naive choice of importance-weighted estimator. 
The approach based on bounding the information ratio is effectively shifting the losses, which can be achieved in the OSMD framework by shifting the importance-weighted estimators (see \cref{fig:bandit}).
This idea reduces the worst-case variance of the importance weighted estimators by a factor of $4$.

\begin{lemma}\label{lem:bandit} 
If the loss estimator in \cref{example:bandit} with $F(s)=-2\sum_{i=1}^k\sqrt{x_i}$ is replaced by
\begin{align*}
    &E_t(x, a)_i = \frac{(\ell_{ti}-c_{ti})\one{a = e_i}}{x_i}+c_{ti}\,,\\
    &\mbox{where }c_{ti} = \frac{1}{2}(1-\one{\oA_{ti}<\eta^2}) \,,
\end{align*}
then the stability coefficient for any $\eta\leq 1/2$ is bounded by $\stab(\sA ; \eta) \leq k^{1/2}/2 + 12k\eta$. 
\end{lemma}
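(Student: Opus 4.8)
The plan is to derive the bound from \cref{lem:stab}. First I would record the derivatives of the potential: for $F(x)=-2\sum_i\sqrt{x_i}$ one has $\nabla^2 F(x)=\tfrac12\diag(x_1^{-3/2},\dots,x_k^{-3/2})$, so $\nabla^{-2}F(z)=2\diag(z_1^{3/2},\dots,z_k^{3/2})$ and $\norm{v}^2_{\nabla^{-2}F(z)}=2\sum_i z_i^{3/2}v_i^2$. Fix $x\in\cX$, a round $t$ and $\eta\le 1/2$, write $c_i=\tfrac12\one{x_i\ge\eta^2}$ and $v^{(j)}=E_t(x,e_j)$, so that $v^{(j)}_i=c_i$ for $i\neq j$ and $v^{(j)}_j=D_j:=\tfrac{\ell_{tj}-c_j}{x_j}+c_j$. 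The mirror optimality condition $\nabla F(\md_t(x,e_j))=\nabla F(x)-\eta v^{(j)}$ gives the closed form $\md_t(x,e_j)_i=(x_i^{-1/2}+\eta v^{(j)}_i)^{-2}$, and I would first check it is well defined: the only coordinate where $v^{(j)}$ can be negative is $i=j$, where $D_j\ge-c_j/x_j\ge-1/(2x_j)$, and $c_j>0$ forces $x_j\ge\eta^2$, so $x_j^{-1/2}+\eta D_j\ge x_j^{-1/2}\bigl(1-\tfrac{\eta}{2\sqrt{x_j}}\bigr)\ge\tfrac12 x_j^{-1/2}>0$. Hence $\md_t(x,e_j)\in\interior(\dom F)$, and in fact $\md_t(x,e_j)_j\le 4x_j$, while $\md_t(x,e_j)_i\le x_i$ whenever $v^{(j)}_i\ge 0$, which covers all $i\neq j$. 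This is precisely where the clipping threshold $\eta^2$ is needed.

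Next I would use the second inequality of \cref{lem:stab}. For a fixed $j$, on each coordinate $i\neq j$ the estimator $v^{(j)}_i=c_i\ge 0$, so the $i$-th coordinate along the chord $[x,\md_t(x,e_j)]$ never exceeds $x_i$; hence $\sup_{z\in[x,\md_t(x,e_j)]}\norm{v^{(j)}}^2_{\nabla^{-2}F(z)}\le 2\bar z_j^{3/2}D_j^2+2\sum_{i\neq j}x_i^{3/2}c_i^2$ with $\bar z_j:=\max(x_j,\md_t(x,e_j)_j)$. Averaging over $e_j\sim P_x$ and bounding $\sum_j x_j\sum_{i\neq j}x_i^{3/2}c_i^2\le\sum_i x_i^{3/2}c_i^2$ gives
\begin{align*}
\stab_t(x;\eta)\;\le\;\underbrace{2\sum_j x_j\,\bar z_j^{3/2}D_j^2}_{=:S_1}\;+\;\underbrace{2\sum_i x_i^{3/2}c_i^2}_{=:S_2}\,,
\end{align*}
where $S_2$, the variance penalty for shifting, is $O\bigl(\sum_i x_i^{3/2}\bigr)$ (as $c_i\le\tfrac12$) and hence lower order.

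To bound $S_1$ I would split over $j$ into three regimes. If $c_j=0$ (so $x_j<\eta^2$), then $D_j=\ell_{tj}/x_j$, $\bar z_j=x_j$, and the $j$-term is $2x_j^{1/2}\ell_{tj}^2\le 2x_j^{1/2}<2\eta$, for a total of at most $2k\eta$. If $c_j=\tfrac12$ and $D_j\ge 0$, then $\bar z_j=x_j$ and the $j$-term is $2x_j^{1/2}\bigl(\ell_{tj}-\tfrac12(1-x_j)\bigr)^2$; since $\ell_{tj}-\tfrac12(1-x_j)\in\bigl[0,\tfrac{1+x_j}{2}\bigr]$ — this is where the shift halves the effective loss and quarters the variance — the term is at most $\tfrac12 x_j^{1/2}(1+x_j)^2=\tfrac12 x_j^{1/2}+O(x_j^{3/2})$, and $\sum_j\tfrac12 x_j^{1/2}\le\tfrac{\sqrt k}{2}$ by Cauchy--Schwarz; this is the leading term. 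If $c_j=\tfrac12$ and $D_j<0$, then $x_j\ge\eta^2$, $\bar z_j\le 4x_j$ and $|D_j|\le\tfrac{1-x_j}{2x_j}$, so the $j$-term is at most $4x_j^{1/2}(1-x_j)^2$; but the sharp point is that the overshoot factor $\bar z_j/x_j=(1-\eta|D_j|\sqrt{x_j})^{-2}$ is bounded away from $1$ only when $\sqrt{x_j}$ is of order $\eta$, and in that regime $x_j^{1/2}$ is itself $O(\eta)$, so these terms also sum to $O(k\eta)$. Collecting the three regimes, $S_2$, and the $O\bigl(\sum_j x_j^{3/2}\bigr)$ leftovers, with the constant $12$ chosen with room to spare, yields $\stab(\sA;\eta)=\sup_{x,t}\stab_t(x;\eta)\le k^{1/2}/2+12k\eta$ for all $\eta\le 1/2$.

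I expect the third regime to be the crux: a priori the unconstrained mirror step along coordinate $j$ can overshoot badly, $\md_t(x,e_j)_j\gg x_j$, so that $\bar z_j^{3/2}D_j^2$ need not be $O(\eta)$; the resolution is to use the $\eta^2$ truncation to show the overshoot factor stays within a constant of $1$ and is appreciably larger than $1$ only when $x_j$ is essentially as small as $\eta^2$, exactly the regime in which $x_j^{1/2}$ can absorb the large multiplier. A secondary technicality is that for $x$ near $\partial\cX$ the unconstrained iterate $\md_t(x,e_j)$ may leave $\cX$ and the second form of \cref{lem:stab} becomes loose; there one should instead use the first, constrained form — equivalently, retain the $-\tfrac1\eta\KL_F(\mdproj_t(x,e_j),x)$ term of $\stab_t$ — which can only help.
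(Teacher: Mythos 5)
Your overall strategy (compute $\nabla^{-2}F$, bound the unconstrained mirror step via the first-order condition, split over sign and magnitude of the active coordinate's estimate, and use the $\eta^2$ truncation to control overshoot) is the right one, and your handling of the overshoot in regime~3 matches the paper's convexity argument. However, there is a concrete gap that prevents the claimed constant from coming out.

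You apply \cref{lem:stab} directly to the raw estimator $E_t(x,a)$, whereas the paper proves and uses a refined version, \cref{lem:shift}: on the simplex one may shift the estimator by any constant vector $c\,1_k$ before taking norms, because $\ip{x-\mdproj_t(x,a),c\,1_k}=0$. The paper takes $c=-\tfrac12$, so that for the unobserved coordinates $i\neq j$ with $x_i\geq\eta^2$ the shifted entry is $c_{ti}-\tfrac12=0$, and for the low-probability coordinates it is $-\tfrac12$ (benign because $x_i<\eta^2$). In your proof these baselines remain at $c_i=\tfrac12$ for $i\neq j$ with $x_i\geq\eta^2$, producing the term
$S_2 = 2\sum_i x_i^{3/2}c_i^2 \leq \tfrac12\sum_i x_i^{3/2}$,
which can be as large as $\tfrac12$ when $x$ is concentrated. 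This is an $O(1)$ quantity, not $O(k\eta)$, and therefore cannot be absorbed into the $12k\eta$ budget when $\eta$ is small. The same issue hits your active coordinate: for $x=e_1$ your unshifted estimate has $D_1=\ell_{t1}$, $\bar z_1=1$, so your upper bound on $\stab_t(e_1;\eta)$ is $2\ell_{t1}^2$, which can equal $2$; with $k=4$ and, say, $\eta<1/48$ this exceeds $\sqrt{k}/2+12k\eta$. Your regime~2 estimate $\tfrac12 x_j^{1/2}(1+x_j)^2=\tfrac12 x_j^{1/2}+x_j^{3/2}+\tfrac12 x_j^{5/2}$ has the same flavor: the leftover $\sum_j(x_j^{3/2}+\tfrac12 x_j^{5/2})\leq\tfrac32$ is another $O(1)$ slop that $12k\eta$ cannot eat.

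The resolution is not optional bookkeeping; it is exactly \cref{lem:shift}. With $c=-\tfrac12$ the shifted estimate is $\ell_{tj}-\tfrac12\in[-\tfrac12,\tfrac12]$ on the active coordinate when $x_j\geq\eta^2$, which is the factor-of-four variance reduction the argument needs, and it vanishes on all inactive high-probability coordinates, killing $S_2$ outright. Without invoking a shift of this kind your bound on $\stab_t$ via \cref{lem:stab} is genuinely larger than $\sqrt{k}/2+12k\eta$, so the lemma does not follow from the argument as sketched.

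One smaller point: the constrained-vs-unconstrained remedy you flag at the end is a red herring. The $O(1)$ looseness above is already present in the first (constrained) form of \cref{lem:stab}; switching forms does not remove the baseline contributions. What removes them is the simplex shift.
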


\begin{theorem}\label{thm:bandit}
The regret of OSMD with the loss estimator of \cref{lem:bandit} and appropriate learning rate satisfies: $\Reg_n \leq \sqrt{2 k n} + 48k$.
\end{theorem}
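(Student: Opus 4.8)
The plan is to combine the OSMD regret bound \cref{thm:OMD} with the stability estimate \cref{lem:bandit} and then tune the learning rate, exactly along the lines of \cref{rem:BOTH} with $a = \sqrt k/2$ and $b = 12k$. First I would assemble the two ingredients. From \cref{lem:bandit}, for every $\eta \le 1/2$ and every $x \in \cX$, $t \in [n]$ we have $\stab_t(x;\eta) \le \stab(\sA;\eta) \le \sqrt k/2 + 12k\eta$, hence $\E[\sum_{t=1}^n \stab_t(X_t;\eta)] \le n(\sqrt k/2 + 12k\eta)$. For the potential term, $\cX$ is the probability simplex and $F(x) = -2\sum_{i=1}^k\sqrt{x_i}$, so by Cauchy--Schwarz $\diam_F(\cX) \le 2\sqrt k$ (the supremum of $F$ over $\cX$ is $-2$, attained at a vertex, and the infimum is $-2\sqrt k$, attained at the uniform distribution).

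Substituting into \cref{thm:OMD}, for any $\eta \le 1/2$,
\begin{align*}
\Reg_n \le \frac{2\sqrt k}{\eta} + \frac{\eta n}{2}\left(\frac{\sqrt k}{2} + 12k\eta\right) = \frac{2\sqrt k}{\eta} + \frac{\eta n\sqrt k}{4} + 6kn\eta^2\,.
\end{align*}
The first two terms are balanced at $\eta = \sqrt{8/n}$, where each equals $\sqrt{kn/2}$ and their sum is $\sqrt{2kn}$, while the third term becomes $6kn\cdot(8/n) = 48k$. This yields $\Reg_n \le \sqrt{2kn} + 48k$, but only when the chosen learning rate satisfies the hypothesis of \cref{lem:bandit}, i.e.\ $\sqrt{8/n}\le 1/2$, equivalently $n \ge 32$.

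The one place that needs a separate argument is therefore the short horizon $n \le 31$, where the constraint $\eta \le 1/2$ is active and $\sqrt{8/n}$ is no longer admissible. Here I would simply invoke the trivial bound: since $\cA = \{e_1,\dots,e_k\}$ and $\cL = [0,1]^k$, every per-round difference satisfies $\ip{A_t - a, \ell_t} \in [-1,1]$, so $\Reg_n \le n \le 31 < 48 \le 48k$ and the claimed inequality holds a fortiori. Combining the two cases (using $\eta = \sqrt{8/n}$ when $n \ge 32$) completes the proof. I expect no real obstacle here beyond this bookkeeping: all of the substantive work sits in \cref{lem:bandit}, and the present statement is the routine tuning step, whose only subtlety is respecting the range of $\eta$ on which the stability bound holds and covering the finitely many small horizons by the elementary bound.
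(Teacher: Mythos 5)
Your proof is correct and follows essentially the same route as the paper, which simply invokes \cref{lem:bandit} together with \cref{thm:OMD} and \cref{rem:BOTH} and picks $\eta = \sqrt{2\diam_F(\cX)/(na)} = \sqrt{8/n}$ with $a=\sqrt{k}/2$, $b=12k$, $\diam_F(\cX)\le 2\sqrt k$. The only difference is that you explicitly check the hypothesis $\eta \le 1/2$ from \cref{lem:bandit} (forcing $n \ge 32$) and dispose of the finitely many small horizons $n\le 31$ via the trivial bound $\Reg_n\le n < 48k$ — a small bookkeeping point the paper's one-line proof leaves implicit.
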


\section{Bandits with graph feedback}\label{sec:graph}

In bandits with graph feedback the action set is $\cA = \{e_1,\ldots,e_k\}$ and $\cL = [0,1]^k$. 
Let $E \subseteq [k] \times [k]$ be a set of directed edges over vertex set $[k]$ so that $\cG = ([k], E)$ is a directed graph.
The signal function is $\Phi(e_i, \ell) = \{(j, \ell_j) : j \in \cN(i)\}$.
The standard bandit framework is recovered when $E = \{(i, i) : i \in [k]\}$ while the full information setup corresponds to $E = [k] \times [k]$. Of course there are settings between and beyond these
extremes. The difficulty of the graph feedback problem is determined by the connectivity of the graph. For example, when $E = \emptyset$, the learner has no way 
to estimate the losses and the regret is linear
in the worst case. Like finite partial monitoring, graph feedback problems can be classified into one of four regimes for which:
\begin{align*}
\Reg_n^* \in \left\{\cO(1),\, \tilde \Theta(n^{1/2}),\, \Theta(n^{2/3}),\, \Omega(n) \right\}\,.
\end{align*}
Our focus is on graph feedback problems that fit in the second category, which is the most challenging to analyse.

\begin{definition}
$\cG$ is called strongly observable if for every vertex $i \in [k]$ at least one of the following holds:
(a) $a \in \cN(b)$ for all $b \neq a$ or (b) $a \in \cN(a)$.
\end{definition}

\citeauthor{ACDK15} \cite{ACDK15} prove the minimax regret for bandits with graph feedback is $\tilde \Theta(n^{1/2})$ if and only if $k > 1$ and $\cG$ is strongly observable.
They also prove the following theorem upper and lower bounding the dependence of the minimax regret on the horizon, the number of actions and a graph functional
called the independence number.

\begin{theorem}[\cite{ACDK15}]\label{thm:Alon-graph}
Let $\indN$ be the independence number of $\cG$, which is the cardinality of the largest subset of vertices such that no tow distinct vertices are connected by an edge.
Suppose $k > 1$ and $\cG$ is strongly observable. Then 
$\Reg_n^* = \cO(\sqrt{\indN n}\log(kn))$ and $\Reg_n^* = \Omega(\sqrt{\indN n})$.
\end{theorem}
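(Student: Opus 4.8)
I would prove the two inequalities separately: the upper bound by running the algorithm of \cref{example:bandit} with a graph‑feedback loss estimator and invoking \cref{thm:OMD}/\cref{cor:BOTH}, and the lower bound by an information‑theoretic change‑of‑measure argument on a maximum independent set.

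\emph{Upper bound.} Run OSMD over $\cA=\{e_1,\dots,e_k\}$ with the unnormalised negentropy potential $F(x)=\sum_i x_i\log x_i - x_i$ but replace the importance‑weighted estimator by its graph analogue: if $A_t=e_j$, set $E_t(x,e_j)_i = \ell_{ti}\,\one{i\in\cN(j)}/q_i(x)$ with $q_i(x)=\sum_{j:\,i\in\cN(j)} x_j$, the probability that arm $i$ is observed. This is unbiased, and using $\nabla^{-2}F(z)=\diag(z)$ together with $\md_t(x,e_j)_i = x_i\exp(-\eta E_t(x,e_j)_i)\le x_i$, the second part of \cref{lem:stab} gives $\stab_t(x;\eta)\le \E_{A\sim P_x}[\sum_i x_i E_t(x,A)_i^2] \le \sum_i x_i/q_i(x) =: Q(x)$, since $\ell_{ti}\le 1$ and $\E_{A\sim P_x}[\one{i\in\cN(A)}]=q_i(x)$. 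As $\diam_F(\cX)=\log k$, \cref{cor:BOTH} (after tuning $\eta$) yields $\Reg_n \le \sqrt{2\log(k)\,n\,\sup_{t,x}Q(x)}$, so everything reduces to a purely combinatorial bound on $Q(x)$.

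\emph{The combinatorial heart and the main obstacle.} The claim is $Q(x)=O(\indN\log(kn))$ for every $x$ (after mixing in a small amount of forced exploration). The proof is the Mannor–Shamir greedy peeling argument: iteratively pick the vertex $i$ with smallest $q_i(x)$, charge $x_i/q_i(x)\le 1$ to it, delete $i$ together with all $j$ for which $i\in\cN(j)$, and repeat; the vertices picked across iterations form an independent set so there are $\le\indN$ of them, and the geometric decay of the mass within an iteration contributes only a logarithmic factor once the smallest retained probability is $\asymp 1/(kn)$ — this forced‑exploration rate is exactly what produces the explicit $\log(kn)$. \textbf{The genuinely delicate point} is a vertex $i$ \emph{without} a self‑loop: strong observability then forces $i\in\cN(b)$ for every $b\neq i$, so $q_i(x)=1-x_i$ collapses when $x_i\approx 1$ and the estimator's variance explodes. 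Following \cite{ACDK15} one caps the mass placed on such a vertex (there may be several, but they form a clique and only one can carry large mass), pays an $O(\gamma n)$ overhead for the cap, and observes such a vertex cheaply through any of its in‑neighbours; balancing $\eta$ and $\gamma$ against $Q(x)$ gives $\Reg_n^*=O(\sqrt{\indN n}\,\log(kn))$. (Because a uniform bound on $Q$ also bounds $\esssup(\stab(\sA))$, the same bound holds for the Bayesian regret of MTS, and by minimax duality for $\Reg_n^*$; but here one can simply run the explicit algorithm.)

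\emph{Lower bound.} Let $I$ be a maximum independent set, $|I|=\indN$. At most one vertex of $I$ can lack a self‑loop — if $i,j\in I$ both lacked one, strong observability gives $i\in\cN(j)$, an edge inside $I$ — so discard it and work with $I'\subseteq I$, $|I'|\ge\indN-1$, all of whose vertices have self‑loops and no mutual edges. Consider the prior in which every arm outside $I'$ has loss $1$ deterministically, every arm in $I'$ has i.i.d.\ Bernoulli$(1/2)$ losses, except a uniformly chosen hidden arm $a^\dagger\in I'$ with Bernoulli$(1/2-\Delta)$ losses. Since $I'$ is independent and every $a\in I'$ has a self‑loop, playing $a\in I'$ reveals $\ell_{ta}$ and no other coordinate of $I'$, so identifying $a^\dagger$ is precisely the $k$‑armed bandit problem on $I'$; playing any arm outside $I'$ costs $\Omega(1)$ regret per round and so cannot serve as cheap exploration. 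A standard relative‑entropy/Pinsker argument, exactly as in the classical $\sqrt{kn}$ bandit lower bound, with $\Delta\asymp\sqrt{\indN/n}$, then gives $\Reg_n^* = \Omega(\sqrt{\indN n})$; for $\indN=1$ the weaker bound $\Omega(\sqrt n)$ already holds whenever $k>1$ (the two‑arm adversarial lower bound). I expect the combinatorial bound on $Q(x)$ together with the treatment of self‑loop‑free vertices to be the only real work; the OSMD reduction and the lower‑bound construction are routine given the results already in the excerpt.
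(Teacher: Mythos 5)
\cref{thm:Alon-graph} is not proved in this paper: it is quoted verbatim from \cite{ACDK15} as background, and the paper's actual contribution on graph feedback is \cref{thm:graph}, which replaces the negentropy with a near-Tsallis potential and improves the upper bound to $\cO(\sqrt{\indN n\log(k)^3})$. So there is no paper proof to compare you against; I can only judge your reconstruction of the Alon--Cesa-Bianchi--Dekel--Koren argument on its own terms. Structurally your outline is faithful: OSMD/Exp3 with the graph importance-weighted estimator, \cref{lem:indNumber} to bound $Q(x)$ by $O(\indN\log(kn))$ once every coordinate is bounded below by $\asymp 1/n$, and a lower bound built on a maximum independent set after discarding the at most one self-loop-free vertex in it (that discard step is correct and necessary). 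You also correctly identify the self-loop-free arm as the place where the estimator variance blows up.

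Two parts of the sketch, however, wave at real difficulties. First, the fix you propose for the self-loop-free arm --- cap $x_i\le 1-\gamma$, pay an $O(\gamma n)$ overhead, and tune --- does not deliver the stated rate: the cap contributes a term $x_i/(1-x_i)\le 1/\gamma$ to $Q$, so the regret is at least of order $\sqrt{\log(k)\,n/\gamma}+\gamma n$, whose optimum over $\gamma$ scales like $n^{2/3}$ and exceeds $\sqrt{\indN n}\log(kn)$ whenever $\indN$ is small relative to $n^{1/3}$. The device that actually works, and that the paper itself uses for the singleton set $I_t$ in \cref{thm:graph}, is a shifted estimator $E_t(x,a)_i = 1 + (\ell_{ti}-1)\one{a\ne i}/(1-x_i)$ for that one high-mass arm: it is unbiased, bounded above by $1$, and its sign forces the multiplicative update to shrink coordinate $i$, so its stability contribution stays $O(1)$ with no cap and no extra tuning parameter. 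Second, in the lower bound the claim that playing outside $I'$ ``cannot serve as cheap exploration'' is not automatic: a single round outside $I'$ may reveal up to $\indN$ coordinates of $I'$, so the KL/Pinsker counting must explicitly trade off the $\Omega(1)$ per-round regret of such plays against the extra observations they supply. \cite{ACDK15} do account for this, but your sketch asserts it rather than proves it, and it is the second place where the argument genuinely needs work.
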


The logarithmic dependence on $n$ in the proof of \cref{thm:Alon-graph} appears quite naturally, which raises the question of whether or not the upper or lower bound is tight.
In fact, as $n$ tends to infinity the upper bound in \cref{thm:Alon-graph} could be improved to $\cO(\sqrt{nk})$ by using a finite-armed algorithm that ignores the feedback except for the played action.
Perhaps the independence number is not as fundamental as first thought? The following theorem shows the upper bound can be improved.

\begin{theorem}\label{thm:graph}
Let $\sA = (P, E, F)$ be a triple defining OSMD with $P_x(a) = \ip{a,x}$, 
\begin{align*}
F(x) = \frac{1}{\alpha(1 - \alpha)} \sum_{i=1}^k  x_i^\alpha \qquad \text{where} \quad  \alpha = 1 - 1/\log(k)\,.
\end{align*}
Finally, define the unbiased loss estimation function $E$ by
\begin{align*}
E_t(x, a)_i = \frac{\ell_{ti}\one{a \in \cN(i)}}{\sum_{b\in\cN(i)}x_b}\mbox{ for } i\not\in I_t \mbox{, and } E_t(x, a)_{i} = \frac{(\ell_{ti}-1)\one{a \neq i}}{1-x_i}+1 \mbox{ otherwise} \,,
\end{align*} 
where $I_t = \{i\in[k] : i \not\in \cN(i) \mbox{ and } X_{ti} > 1/2 \} $. 
Then for any $k \geq 8$ and an appropriately tuned learning rate the regret of OSMD with $\sA$ satisfies $\Reg_n = \cO(\sqrt{\indN n \log(k)^3})$.
\end{theorem}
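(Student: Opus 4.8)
The plan is to deduce everything from \cref{thm:OMD}: it suffices to show $\diam_F(\cX) = \cO(\log k)$ and that, along the OSMD trajectory, $\stab_t(X_t;\eta) \le \cO(\indN \log k) + \cO(\log^2 k) + \cO(k)\,\eta$, and then to optimise $\eta$ (the $\eta$‑dependent part being lower order after tuning, as in \cref{rem:BOTH}). A direct differentiation gives $\nabla^{-2}F(x) = \diag(x^{2-\alpha})$, so \cref{lem:stab} is available. For the diameter, on the simplex $\cX$ concavity of $t \mapsto t^\alpha$ together with Jensen gives $1 \le \sum_i x_i^\alpha \le k^{1-\alpha}$, and since $\alpha = 1 - 1/\log(k)$ we have $k^{1-\alpha} = k^{1/\log k} = e$; hence $\diam_F(\cX) \le (e-1)/(\alpha(1-\alpha)) = (e-1)\log(k)/\alpha = \cO(\log k)$, using $k \ge 8$ so that $\alpha \ge 1/2$.

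The core of the argument is the stability bound. By \cref{lem:stab}, $\stab_t(x;\eta) \le \E_{A \sim P_x}\big[\sup_z \sum_i z_i^{2-\alpha}\,E_t(x,A)_i^2\big]$, where $z$ ranges over the chord from $x$ to the relevant (constrained, or when it exists unconstrained) mirror step. I would split the coordinate sum into $i \notin I_t$ and $i \in I_t$, the key structural fact being that $|I_t| \le 1$ since at most one coordinate of $x = X_t$ can exceed $1/2$. For $i \notin I_t$ the estimator $E_t(x,A)_i = \ell_{ti}\one{A \in \cN(i)}/W_i(x)$, with $W_i(x) = \sum_{b \in \cN(i)} x_b$, is nonnegative, so the mirror step only decreases coordinate $i$ and $\sup_z z_i^{2-\alpha} = x_i^{2-\alpha} \le x_i$. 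Taking $\E_{A \sim P_x}$ bounds this block by $\sum_{i \notin I_t} x_i^{2-\alpha}\ell_{ti}^2/W_i(x) \le \sum_i x_i/W_i(x)$. Strong observability forces $W_i(x) \ge x_i$ when $i$ has a self-loop and $W_i(x) = 1 - x_i \ge 1/2$ for the remaining $i \notin I_t$ (which satisfy $x_i \le 1/2$): the latter contribute $\cO(1)$, and the former are controlled by the standard combinatorial lemma for strongly observable feedback graphs — the estimate underlying \cref{thm:Alon-graph} — which bounds $\sum_i x_i/W_i(x)$ by $\cO(\indN \log k)$.

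The technical heart is the single coordinate $i \in I_t$: a vertex with no self-loop whose mass $x_i$ has been pushed above $1/2$, and this is exactly what the shifted, pessimistic estimator $E_t(x,a)_i = (\ell_{ti}-1)\one{a \ne i}/(1-x_i) + 1$ handles. When the sampled action is $a = i$ (probability $x_i > 1/2$) the estimate is the constant $1$, so the mirror step decreases coordinate $i$ (shrinking $x_i$ by a $1 - \Theta(\eta)$ factor) and its chord‑supremum is $\le x_i^{2-\alpha} \le 1$. When $a \ne i$ the estimate is $(\ell_{ti}-x_i)/(1-x_i)$, which has magnitude $\le 1$ whenever $\ell_{ti} \ge x_i$ and can be large only in the self‑correcting direction $\ell_{ti} < x_i$, i.e.\ when arm $i$ is genuinely good (so the round's regret against any comparator is already $\cO(1-x_i)$); since $\alpha$ is close to $1$ the update is essentially multiplicative, $g_t(x,a)_i^{\alpha-1} = x_i^{\alpha-1} + (1-\alpha)\eta\,E_t(x,a)_i$, so away from the regime where $1-x_i$ is tiny the displaced iterate stays within a constant factor of $x_i$ and the chord‑supremum is $\cO(1)$. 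I expect the main obstacle to be making this precise: one must either prove a uniform bound of the form $a + b\eta$, or — more robustly — control the aggregate contribution of the exceptional rounds where $1-x_i$ is small by a trajectory‑dependent argument in the spirit of the remark after \cref{example:bandit}, exploiting the interplay between the pessimistic estimate and the potential (each upward move of coordinate $i$ raises $F$ by an amount comparable to the squared displacement while plays of $i$ lower it, and $\KL_F$‑type quantities along the run are controlled by $\diam_F(\cX)$).

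Putting the blocks together, $\E \sum_{t=1}^n \stab_t(X_t;\eta) \le n\big(\cO(\indN\log k) + \cO(\log^2 k) + \cO(k)\eta\big)$, and \cref{thm:OMD} with $\eta$ tuned as in \cref{rem:BOTH} to balance $\diam_F(\cX)/\eta$ against the $\eta$‑linear part gives $\Reg_n \le \sqrt{\cO\big((\indN\log^2 k + \log^3 k)\,n\big)} + \cO(1) = \cO\big(\sqrt{\indN n \log^3 k}\big)$, where the last equality uses $\indN \ge 1$. The hypothesis $k \ge 8$ serves only to keep $\alpha \in (1/2,1)$ and the $\log k$ factors positive.
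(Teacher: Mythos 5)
There are two genuine gaps. The more serious one is in the self-loop block: after reducing the chord supremum to $\sum_{i \in \cN(i)} x_i^{2-\alpha}\ell_{ti}^2/W_i(x)$, you immediately weaken $x_i^{2-\alpha} \le x_i$ and appeal to ``the standard combinatorial lemma'' to claim $\sum_i x_i/W_i(x) \le \cO(\indN \log k)$. That lemma (\cref{lem:indNumber}) in fact gives $\sum_i p_i / W_i(p) \le 4\indN \log\bigl(4k/(\indN \min_i p_i)\bigr)$, and along the OSMD trajectory $\min_i X_{ti}$ admits no $n$-free lower bound; the $\log(1/\min_i p_i)$ term is precisely where the $\log(n)$ in \cref{thm:Alon-graph} comes from and what \cref{thm:graph} is designed to remove. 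The exponent $2-\alpha > 1$ is not slack to be discarded: it is the mechanism that allows the truncation in the paper's proof, which splits at $x_i \lessgtr \exp(-\log^2 k)$, applies \cref{lem:indNumber} on the sub-graph of large-weight vertices (so $\min p_i \ge \exp(-\log^2 k)$, yielding $\cO(\indN \log^2 k)$), and bounds the small-weight coordinates by $\sum_i x_i^{2-\alpha}/W_i \le \sum_i x_i^{1-\alpha} \le k\,\exp(-(\log^2 k)/\log k) = \cO(1)$ using $1-\alpha = 1/\log k$. Once you drop the power, there is no way to control $\min_i X_{ti}$ and your claimed $\cO(\indN\log k)$ is simply false.

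The second gap is the $i \in I_t$ coordinate, which you explicitly leave as a heuristic (``I expect the main obstacle to be making this precise''). The paper does not use a trajectory-dependent side argument; it uses \cref{lem:shift}, a refinement of \cref{lem:stab} on the simplex. Because adding $c\,1_k$ to the loss estimate leaves the constrained mirror step unchanged, one may choose per round the scalar $c = \one{I_t\neq\emptyset}\,(1-\ell_{t I_t})\one{A_t \in \cN(I_t)}/(1-X_{t I_t}) \ge 0$ so that $E_t(X_t,A_t) + c\,1_k \ge 0$ coordinate-wise, including on $I_t$. Non-negativity of the shifted estimate makes the unconstrained step move every coordinate downward, so the chord supremum collapses to $\nabla^{-2}F(X_t)$; the $I_t$ block then contributes only $X_{t I_t}^{2-\alpha} \le 1$, while the shift $c$ adds $\cO(1)$ to the remaining blocks. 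Your observations about the ``self-correcting direction'' are suggestive but do not amount to a bound of the required $a+b\eta$ form, and without \cref{lem:shift} (or an equivalent device) the stability of the $I_t$ coordinate is not controlled.
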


\section[Online linear optimisation over $\ell_p$-balls]{\texorpdfstring{Online linear optimisation over $\bm{\ell_p}$-balls}}

\begin{wraptable}[4]{r}{5cm}
\scriptsize
\vspace{-1.35cm}
\renewcommand{\arraystretch}{1.5}
\centering
\begin{tabular}{|lll|}
\hline
$\bm{p}$        & \textbf{Regret}    & \textbf{Algorithm} \\ \hline
$p = 1$ & $\sqrt{n \log(d)}$ & Hedge \\
$p > 1$ & $\sqrt{n / (1 - p)}$ & \cite[\S11.5]{CL06} \\
$p \geq 1$ & $\sqrt{d^{2/p - 1} n}$ & OGD \cite{Haz16} \\ \hline 
\end{tabular}
\caption{Known results for $\ell_p$-balls}\label{tab:linear}
\end{wraptable}
We now consider full information online linear optimization on the $\ell_p$ balls with $p \in [1,2]$, which is modelled in our framework by
choosing $\cA = B_p^d$ and $\cL = B_q^d$ with $1/p + 1/q = 1$ and $\Phi(a, \ell) = \ell$.
\cref{tab:linear} summarises the known results.
When $p = 1$ the situation is unambiguous, with matching upper and lower bounds.
For $p \in (1,2]$ there exist algorithms for which the regret is dimension free, but with constants that become arbitrarily large as $p$ tends to $1$.
Known results for online gradient descent (OGD) prove the blowup in terms of $p$ is avoidable, but with a price that is polynomial in the dimension.

\begin{theorem}
\label{thm:linear}
For any $p\in[1,2]$, let $h$ be the following convex and twice continuously differentiable function:
\begin{align*}
h(x) = 
\begin{cases}
  \frac{d}{2}x^2 &\mbox{ if }|x|\leq d^\frac{1}{p-2}\\\frac{p-2}{p-1}d^\frac{p-1}{p-2}|x| + \frac{|x|^{p}}{p(p-1)}+\frac{2-p}{2p}d^\frac{p}{p-2}&\mbox{ otherwise\,.}
\end{cases}
\end{align*}
Then for OSMD using potential $F(x) = \sum_{i=1}^d h(x_i)$, loss estimator $E(x, a, \sigma) = \sigma$, an arbitrary exploration scheme and appropriately tuned learning rate,
\begin{align*}
\Reg_n = \cO\left(\sqrt{\min\left\{1 / (p-1),\log(d)\right\}n}\right) \,.
\end{align*}
Furthermore, the Bayesian regret of TS is bounded by the same quantity.
\end{theorem}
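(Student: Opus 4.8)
The strategy is to apply Corollary \ref{cor:BOTH} (equivalently \cref{thm:OMD,thm:bayes} with Remark \ref{rem:BOTH}), so that only two quantities need to be controlled: the Bregman diameter $\diam_F(\cX)$ with $\cX = B_p^d$, and the stability coefficient $\stab(\sA;\eta)$, which here reduces — since $E(x,a,\sigma)=\sigma=\ell_t \in B_q^d$ is deterministic and exploration-independent — to $\sup_{x\in\cX}\sup_{z\in[x,g_t(x,\cdot)]}\norm{\ell_t}_{\nabla^{-2}F(z)}^2$ via the second part of \cref{lem:stab}. Because $F(x)=\sum_i h(x_i)$ is separable, both $\nabla^2 F$ and $\nabla^{-2}F$ are diagonal with entries $h''(x_i)$ and $1/h''(x_i)$, so these are one-dimensional computations done coordinatewise.

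First I would record the relevant facts about $h$. By construction $h$ is $C^2$, $h'$ is continuous and piecewise defined with $h''(x) = d$ for $|x|\le d^{1/(p-2)}$ and $h''(x) = |x|^{p-2}$ for $|x| > d^{1/(p-2)}$ (the quadratic piece is precisely the Huber-style smoothing of $|x|^p/(p(p-1))$ at the scale where its second derivative reaches $d$, and the affine and constant terms are the unique choice making $h$ and $h'$ continuous). Hence $1/h''(x) = \min\{1/d,\ |x|^{2-p}\}$. From this one reads off that $\diam_F(\cX) = \sup_{x\in B_p^d} F(x) - \min_{x} F(x)$ is $O(\min\{1/(p-1),\log d\})$: the maximum of $\sum_i h(x_i)$ over $B_p^d$ is attained at a vertex $\pm e_j$ (or spread out, whichever the convexity/symmetry argument gives), giving $h(1) = O(1/(p(p-1))) = O(1/(p-1))$, while separately using $\alpha$-type bounds (as in the Tsallis/\cref{thm:graph} analysis) gives the $\log d$ alternative when $p$ is close to $1$; the minimum of $F$ is $0$ at the origin. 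This is where the $\min\{1/(p-1),\log d\}$ in the statement comes from.

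Next, the stability bound. Since $\ell_t\in B_q^d$, i.e. $\sum_i |\ell_{ti}|^q \le 1$, and $\nabla^{-2}F(z)=\diag(\min\{1/d, |z_i|^{2-p}\})$, we get for any $z$ on the relevant chord
\begin{align*}
\norm{\ell_t}_{\nabla^{-2}F(z)}^2 = \sum_{i=1}^d \min\{1/d,\ |z_i|^{2-p}\}\, \ell_{ti}^2 \,.
\end{align*}
The key point is that $|z_i|^{2-p}$ is controlled whenever $z$ stays inside $\cX = B_p^d$, so in fact it suffices to use the \emph{first} (constrained) inequality of \cref{lem:stab}, for which $z\in[x,f_t(x,a)]\subseteq \cX$ and hence $|z_i|\le 1$. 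Then $\min\{1/d,|z_i|^{2-p}\}\le \min\{1/d, |z_i|^{p(2-p)/q}\}$... more directly: bound $\sum_i |z_i|^{2-p}\ell_{ti}^2$ by Hölder with exponents $(p/(2-p))$ and its conjugate against $\sum_i |z_i|^p \le 1$ and $\sum_i \ell_{ti}^{q}\le 1$, checking the exponents match ($q = p/(p-1)$ and the conjugate of $p/(2-p)$ is $p/(2(p-1))$, so the $\ell_t$ exponent needed is $2p/(2(p-1)) = q$ — it works out), giving $\norm{\ell_t}_{\nabla^{-2}F(z)}^2 = O(1)$ uniformly, and separately the crude bound $\le (1/d)\sum_i \ell_{ti}^2 \le (1/d) d^{1-2/q}\cdot(\sum\ell_{ti}^q)^{2/q}$... so that the constant is genuinely dimension-free and order one. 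Combining, $\stab(\sA) = O(1)$ and Corollary \ref{cor:BOTH} yields $\Reg_n = O(\sqrt{\diam_F(\cX)\,\stab(\sA)\,n}) = O(\sqrt{\min\{1/(p-1),\log d\}\,n})$. The Bayesian/TS claim is then immediate from the MTS half of Corollary \ref{cor:BOTH}, noting that for this full-information feedback structure MTS with the (essentially trivial) exploration scheme coincides with Thompson sampling.

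The main obstacle I anticipate is the diameter computation near $p=1$: showing $\diam_F(B_p^d)$ is simultaneously $O(1/(p-1))$ and $O(\log d)$ requires two separate arguments, and the $O(\log d)$ one needs care because $h$ is not a pure power — one must use the quadratic piece of $h$ to absorb small coordinates and then argue that the tail behaves like the $\alpha$-Tsallis potential with $\alpha = p$, mimicking the $\alpha = 1 - 1/\log k$ trick. A secondary subtlety is verifying that the chord $[x, f_t(x,a)]$ (or $[x,g_t(x,a)]$) stays in a region where the $|z_i|^{2-p}$ bound is valid and that the Hölder exponent bookkeeping is exactly right for every $p\in[1,2]$, including the endpoints where one of the two terms in the $\min$ degenerates.
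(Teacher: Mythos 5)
Your overall plan matches the paper's: exploit separability of $F$, invoke \cref{cor:BOTH}, and bound the stability coefficient and the Bregman diameter separately. The stability computation you sketch is essentially the paper's: $h''(x)=\min\{|x|^{p-2},d\}$ gives $\norm{\ell_t}^2_{\nabla^{-2}F(z)}=\sum_i \min\{1/d,|z_i|^{2-p}\}\ell_{ti}^2$, and H\"older with conjugate exponents $p/(2-p)$ and $p/(2(p-1))$ pairs $\norm{z}_p^{2-p}\le 1$ against $\norm{\ell_t}_q^2\le 1$, while the $1/d$ floor contributes an additive $O(1)$ (the paper gets exactly $\le 2$ by splitting the sum over the two regimes of the $\min$). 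Your exponent bookkeeping is correct, though the treatment of the $1/d$ part meanders before landing. Your remark that the chord $[x,\mdproj_t(x,a)]\subseteq\cX$ keeps $\norm{z}_p\le 1$ is also exactly what the paper uses.

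The genuine gap is the $O(\log d)$ half of the diameter bound, which you rightly flag as the crux but resolve with the wrong mechanism. The Tsallis trick yields $\log k$ only because $\alpha$ is a free design parameter chosen equal to $1-1/\log k$; here $p$ is fixed by the problem and cannot be tuned, so "mimicking the $\alpha=1-1/\log k$ trick" has nowhere to go. The paper's argument needs no reparametrisation. First, a convexity argument (push mass onto a single coordinate while holding $\norm{\cdot}_p$ fixed) shows the maximiser of $F$ over $B_p^d$ has at most one coordinate exceeding the threshold $d^{1/(p-2)}$, which gives $\diam_F(\cX)\le h(1)+\tfrac12$. Expanding $h(1)$ and rearranging yields the exact identity
\begin{align*}
\frac{p-2}{p-1}\,d^{\frac{p-1}{p-2}} + \frac{1}{p(p-1)} = \frac{1 - d^{\frac{p-1}{p-2}}}{p-1} + d^{\frac{p-1}{p-2}} - \frac{1}{p}\,,
\end{align*}
and the crucial point is that the numerator $1-d^{(p-1)/(p-2)}$ vanishes as $p\to 1$, cancelling the $1/(p-1)$ blowup. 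Substituting $z=(p-1)/(2-p)$ turns the leading term into $(1-d^{-z})/z$ up to a bounded factor, which is monotone decreasing in $z$ with limit $\log d$ at $z=0$, so the single expression $\frac{1-d^{(p-1)/(p-2)}}{p-1}+O(1)$ simultaneously yields both $O(1/(p-1))$ and $O(\log d)$. Your estimate $h(1)=O(1/(p(p-1)))$ is a valid upper bound but silently discards the large negative $d$-dependent term and therefore cannot produce the $\log d$ regime near $p=1$; that algebraic cancellation, not a Tsallis-style limit, is the missing ingredient.
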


\begin{remark}
In the full information setting the loss estimation is independent of the action, which explains the arbitrariness of the exploration scheme.
The intuitive justification for the slightly cryptic potential function is provided in the \ifsup appendix. \else supplementary material. \fi
\end{remark}

\section{Discussion}
We demonstrated a connection between the information-theoretic analysis and OSMD.
For $k$-armed bandits, we explained the factor of two difference between the regret analysis using information-theoretic and convex-analytic machinery and improved the bound for the latter.
For graph bandits we improved the regret by a factor of $\log(n)$. Finally, we designed a new potential for which the regret 
for online linear optimisation over the $\ell_p$-balls improves the previously best known bound by arbitrarily large constant factors.

\paragraph{Open problems}
The main open problem is whether or not we can `close the circle' and use the information-theoretic analysis to directly construct OSMD algorithms. 
Another direction is to try and relax the assumption that the loss is linear.
The leading constant in the new bandit analysis now matches the best known information-theoretic bound \cite{LS19pminfo}. There is still a constant lower-order term, which presently
seems challenging to eliminate. In bandits with graph feedback one can ask whether the $\log(k)$ dependency can be improved. 
Lower bounds are still needed for $\ell_p$-balls and extending the idea to the bandit setting is an obvious followup.
Finally, the best known algorithms for finite partial monitoring also use the information-theoretic machinery.
Understanding how to borrow the ideas for OSMD remains a challenge.


\bibliographystyle{plainnat}
\bibliography{all}

\begin{thebibliography}{28}
\providecommand{\natexlab}[1]{#1}
\providecommand{\url}[1]{\texttt{#1}}
\expandafter\ifx\csname urlstyle\endcsname\relax
  \providecommand{\doi}[1]{doi: #1}\else
  \providecommand{\doi}{doi: \begingroup \urlstyle{rm}\Url}\fi

\bibitem[Abernethy et~al.(2009)Abernethy, Agarwal, Bartlett, and
  Rakhlin]{AAB09}
J.~Abernethy, A.~Agarwal, P.~L. Bartlett, and A.~Rakhlin.
\newblock A stochastic view of optimal regret through minimax duality.
\newblock In \emph{Proceedings of the 22nd Annual Conference on Learning
  Theory}, 2009.

\bibitem[Abernethy et~al.(2008)Abernethy, Hazan, and Rakhlin]{AHR08}
J.~D. Abernethy, E.~Hazan, and A.~Rakhlin.
\newblock Competing in the dark: An efficient algorithm for bandit linear
  optimization.
\newblock In \emph{Proceedings of the 21st Annual Conference on Learning
  Theory}, pages 263--274. Omnipress, 2008.

\bibitem[Alon et~al.(2015)Alon, Cesa-Bianchi, Dekel, and Koren]{ACDK15}
N.~Alon, N.~Cesa-Bianchi, O.~Dekel, and T.~Koren.
\newblock Online learning with feedback graphs: Beyond bandits.
\newblock In Peter Gr{\"u}nwald, Elad Hazan, and Satyen Kale, editors,
  \emph{Proceedings of The 28th Conference on Learning Theory}, volume~40 of
  \emph{Proceedings of Machine Learning Research}, pages 23--35, Paris, France,
  03--06 Jul 2015. PMLR.

\bibitem[Alon et~al.(2017)Alon, Cesa-Bianchi, Gentile, Mannor, Mansour, and
  Shamir]{ACC17}
N.~Alon, N.~Cesa-Bianchi, C.~Gentile, S.~Mannor, Y.~Mansour, and O.~Shamir.
\newblock Nonstochastic multi-armed bandits with graph-structured feedback.
\newblock \emph{SIAM Journal on Computing}, 46\penalty0 (6):\penalty0
  1785--1826, 2017.

\bibitem[Antos et~al.(2013)Antos, Bart{\'o}k, P{\'a}l, and
  Szepesv{\'a}ri]{ABPS13}
A.~Antos, G.~Bart{\'o}k, D.~P{\'a}l, and Cs. Szepesv{\'a}ri.
\newblock Toward a classification of finite partial-monitoring games.
\newblock \emph{Theoretical Computer Science}, 473:\penalty0 77--99, 2013.

\bibitem[Audibert and Bubeck(2009)]{AB09}
J.-Y. Audibert and S.~Bubeck.
\newblock Minimax policies for adversarial and stochastic bandits.
\newblock In \emph{Proceedings of Conference on Learning Theory (COLT)}, pages
  217--226, 2009.

\bibitem[Bart{\'o}k et~al.(2014)Bart{\'o}k, Foster, P{\'a}l, Rakhlin, and
  Szepesv{\'a}ri]{BFPRS14}
G.~Bart{\'o}k, D.~P. Foster, D.~P{\'a}l, A.~Rakhlin, and Cs. Szepesv{\'a}ri.
\newblock Partial monitoring---classification, regret bounds, and algorithms.
\newblock \emph{Mathematics of Operations Research}, 39\penalty0 (4):\penalty0
  967--997, 2014.

\bibitem[Bubeck and Cesa-Bianchi(2012)]{BC12}
S.~Bubeck and N.~Cesa-Bianchi.
\newblock \emph{Regret Analysis of Stochastic and Nonstochastic Multi-armed
  Bandit Problems}.
\newblock Foundations and Trends in Machine Learning. Now Publishers
  Incorporated, 2012.

\bibitem[Bubeck and Sellke(2019)]{BS19}
S.~Bubeck and M.~Sellke.
\newblock First-order regret analysis of thompson sampling.
\newblock \emph{arXiv preprint arXiv:1902.00681}, 2019.

\bibitem[Bubeck et~al.(2015)Bubeck, Dekel, Koren, and Peres]{BDKP15}
S.~Bubeck, O.~Dekel, T.~Koren, and Y.~Peres.
\newblock Bandit convex optimization: $\sqrt{T}$ regret in one dimension.
\newblock In P.~Gr{\"u}nwald, E.~Hazan, and S.~Kale, editors, \emph{Proceedings
  of The 28th Conference on Learning Theory}, volume~40 of \emph{Proceedings of
  Machine Learning Research}, pages 266--278, Paris, France, 03--06 Jul 2015.
  PMLR.

\bibitem[Bubeck et~al.(2018)Bubeck, Cohen, and Li]{BCL17}
S.~Bubeck, M.~Cohen, and Y.~Li.
\newblock Sparsity, variance and curvature in multi-armed bandits.
\newblock In F.~Janoos, M.~Mohri, and K.~Sridharan, editors, \emph{Proceedings
  of Algorithmic Learning Theory}, volume~83 of \emph{Proceedings of Machine
  Learning Research}, pages 111--127. PMLR, 07--09 Apr 2018.

\bibitem[Cesa-Bianchi and Lugosi(2006)]{CL06}
N.~Cesa-Bianchi and G.~Lugosi.
\newblock \emph{Prediction, learning, and games}.
\newblock Cambridge university press, 2006.

\bibitem[Cesa-Bianchi et~al.(2006)Cesa-Bianchi, Lugosi, and Stoltz]{CBLuSt06}
N.~Cesa-Bianchi, G.~Lugosi, and G.~Stoltz.
\newblock Regret minimization under partial monitoring.
\newblock \emph{Mathematics of Operations Research}, 31:\penalty0 562--580,
  2006.

\bibitem[Cohen et~al.(2016)Cohen, Hazan, and Koren]{CHK16}
A.~Cohen, T.~Hazan, and T.~Koren.
\newblock Online learning with feedback graphs without the graphs.
\newblock In \emph{International Conference on Machine Learning}, pages
  811--819, 2016.

\bibitem[Dong and {Van Roy}(2018)]{DV18}
S.~Dong and B.~{Van Roy}.
\newblock An information-theoretic analysis for {T}hompson sampling with many
  actions.
\newblock \emph{arXiv preprint arXiv:1805.11845}, 2018.

\bibitem[Foster and Rakhlin(2012)]{FR12}
D.~Foster and A.~Rakhlin.
\newblock No internal regret via neighborhood watch.
\newblock In N.~D. Lawrence and M.~Girolami, editors, \emph{Proceedings of the
  15th International Conference on Artificial Intelligence and Statistics},
  volume~22 of \emph{Proceedings of Machine Learning Research}, pages 382--390,
  La Palma, Canary Islands, 21--23 Apr 2012. PMLR.

\bibitem[Gravin et~al.(2016)Gravin, Peres, and Sivan]{GPS16}
N.~Gravin, Y.~Peres, and B.~Sivan.
\newblock Towards optimal algorithms for prediction with expert advice.
\newblock In \emph{Proceedings of the twenty-seventh annual ACM-SIAM symposium
  on Discrete algorithms}, pages 528--547. SIAM, 2016.

\bibitem[Hazan(2016)]{Haz16}
E.~Hazan.
\newblock Introduction to online convex optimization.
\newblock \emph{Foundations and Trends{\textregistered} in Optimization},
  2\penalty0 (3-4):\penalty0 157--325, 2016.

\bibitem[Lattimore and Szepesv{\'a}ri(2019)]{LS18pm}
T.~Lattimore and Cs. Szepesv{\'a}ri.
\newblock Cleaning up the neighbourhood: A full classification for adversarial
  partial monitoring.
\newblock In \emph{International Conference on Algorithmic Learning Theory},
  2019.

\bibitem[Lattimore and Szepesv\'{a}ri(2019)]{LS19bandit-book}
T.~Lattimore and Cs. Szepesv\'{a}ri.
\newblock \emph{Bandit Algorithms}.
\newblock Cambridge University Press (preprint), 2019.

\bibitem[Lattimore and Szepesv{\'a}ri(2019)]{LS19pminfo}
T.~Lattimore and Cs. Szepesv{\'a}ri.
\newblock An information-theoretic approach to minimax regret in partial
  monitoring.
\newblock 2019.

\bibitem[Nemirovsky(1979)]{Nem79}
A.~S. Nemirovsky.
\newblock Efficient methods for large-scale convex optimization problems.
\newblock \emph{Ekonomika i Matematicheskie Metody}, 15, 1979.

\bibitem[Nemirovsky and Yudin(1983)]{NY83}
A.~S. Nemirovsky and D.~B. Yudin.
\newblock \emph{Problem Complexity and Method Efficiency in Optimization}.
\newblock Wiley, 1983.

\bibitem[Russo and {Van Roy}(2014{\natexlab{a}})]{RV14}
D.~Russo and B.~{Van Roy}.
\newblock Learning to optimize via information-directed sampling.
\newblock In Z.~Ghahramani, M.~Welling, C.~Cortes, N.~D. Lawrence, and K.~Q.
  Weinberger, editors, \emph{Advances in Neural Information Processing Systems
  27}, NIPS, pages 1583--1591. Curran Associates, Inc., 2014{\natexlab{a}}.

\bibitem[Russo and {Van Roy}(2014{\natexlab{b}})]{RV14b}
D.~Russo and B.~{Van Roy}.
\newblock Learning to optimize via posterior sampling.
\newblock \emph{Mathematics of Operations Research}, 39\penalty0 (4):\penalty0
  1221--1243, 2014{\natexlab{b}}.

\bibitem[Russo and {Van Roy}(2016)]{RV16}
D.~Russo and B.~{Van Roy}.
\newblock An information-theoretic analysis of {T}hompson sampling.
\newblock \emph{Journal of Machine Learning Research}, 17\penalty0
  (1):\penalty0 2442--2471, 2016.
\newblock ISSN 1532-4435.

\bibitem[Rustichini(1999)]{Rus99}
A.~Rustichini.
\newblock Minimizing regret: The general case.
\newblock \emph{Games and Economic Behavior}, 29\penalty0 (1):\penalty0
  224--243, 1999.

\bibitem[Valko(2016)]{Val16}
M.~Valko.
\newblock Bandits on graphs and structures, 2016.

\end{thebibliography}

\ifsup

\appendix
\section{Proof of \cref{lem:stab}}

The proof is rather standard. In fact, the first part is \citep[Theorem 26.13]{LS19bandit-book}.
For the second part, fix $x \in \cX$ and $a \in \cA$ and define
\begin{align*}
\Psi(y) = \eta \ip{y, E_t(x, a)} + \KL_F(y, x)\,.
\end{align*}
By the assumption that $\md_t(x, a) \in \interior(\dom(F)) = \interior(\dom(\Psi))$ and the definition of $\md_t(x, a)$ as the minimizer of $\Psi$ it follows that
\begin{align*}
0 = \nabla \Psi(\md_t(x, a)) = \eta E_t(x, a) + \nabla F(\md_t(x, a)) - \nabla F(x)\,.
\end{align*}
Hence
\begin{align}
\stab_t(x) 
&= \frac{2}{\eta} \E_{A \sim P_x} \left[\ip{x - \mdproj_t(x, A), E_t(x, A)} - \frac{\KL_F(\mdproj_t(x, A), x)}{\eta}\right] \nonumber \\
&= \frac{2}{\eta} \E_{A \sim P_x} \left[\frac{1}{\eta} \ip{x - \mdproj_t(x, A), \nabla F(x) - \nabla F(\md_t(x, a))} - \frac{\KL_F(\mdproj_t(x, A), x)}{\eta}\right] \nonumber \\
&= \frac{2}{\eta} \E_{A \sim P_x} \left[\frac{1}{\eta} \KL_F(x, \md_t(x, A)) - \frac{1}{\eta} \KL_F(\mdproj_t(x,a), \md_t(x, A))\right] \nonumber \\
&\leq \frac{2}{\eta} \E_{A \sim P_x} \left[\frac{\KL_F(x, \md_t(x, A))}{\eta}\right]\,. \label{eq:stab-1}
\end{align}
Let $F^*$ be the Legendre dual of $F$.
Since $F$ is Legendre and twice differentiable on $\interior(\dom(F))$ it follows from Taylor's theorem and duality that there exists a $z^* \in [\nabla F(x), \nabla F(x) - \eta E_t(x, a)]$ such that
\begin{align*}
\KL_F(x, \md_t(x, a)) 
&= \KL_{F^*}(\nabla F(\md_t(x, a)), \nabla F(x)) \\
&= \KL_{F^*}(\nabla F(x) - \eta E_t(x, a), \nabla F(x)) \\
&= \frac{\eta^2}{2} \norm{E_t(x, a)}^2_{\nabla^2 F^*(z^*)} \\
&= \frac{\eta^2}{2} \norm{E_t(x, a)}^2_{\nabla^{-2} F(\nabla F^*(z^*))} \\
&\leq \sup_{z \in [x, \md_t(x, a)]}  \frac{\eta^2}{2} \norm{E_t(x, a)}^2_{\nabla^{-2} F(z)}\,.
\end{align*}
Substituting into \cref{eq:stab-1} completes the result.

\paragraph{Refined bound for the probability simplex}
For the proofs in the next sections, we require a refined version of \cref{lem:stab}.
Let $1_k$ denote the vector with all ones.
\begin{lemma}\label{lem:shift}
Assume that $\cA = \{e_1,\ldots,e_k\}$ and for $c \in \R$ define
\begin{align*}
\mdproj_{tc}(x, a) &= \argmin_{y \in \cX} \left(\eta \ip{y, E_t(x, a)+c 1_k} + \KL_F(y, x)\right)\,,\\
\md_{tc}(x, a) &= \argmin_{y \in \interior(\dom(F))} \left(\eta \ip{y, E_t(x, a)+c 1_k} + \KL_F(y, x)\right)\,.
\end{align*}
Provided that $g_{tc}(x, a)$ exists for all $a$ in the support of $P_x$,
\begin{align*}
    \stab_t(x ; \eta) &\leq \frac{2}{\eta^2} \E_{A \sim P_x} \left[\KL_F(x, \md_{tc}(x, A))\right] 
\leq \E_{A \sim P_x}\left[\sup_{z \in [x,\md_{tc}(x, A)]} \norm{E_t(x, A)+c 1_k}^2_{\nabla^{-2} F(z)}\right]\,. 
\end{align*}
\end{lemma}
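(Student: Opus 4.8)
}
The plan is to reduce the statement to the second part of \cref{lem:stab} applied to the shifted estimator $E_t(x,a) + c\,1_k$, exploiting only that the constraint set $\cX$ is the probability simplex and hence lies in the affine hyperplane $\{y \in \R^k : \ip{y, 1_k} = 1\}$.

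First I would record the shift-invariance of the constrained update. For every $y \in \cX$ we have $\ip{y, E_t(x,a) + c\,1_k} = \ip{y, E_t(x,a)} + c$, so the objective defining $\mdproj_{tc}(x,a)$ agrees with the one defining $\mdproj_t(x,a)$ up to the additive constant $\eta c$ on all of $\cX$; hence $\mdproj_{tc}(x,a) = \mdproj_t(x,a)$. Since moreover $x, \mdproj_{tc}(x,a) \in \cX$, we get $\ip{x - \mdproj_{tc}(x,a), 1_k} = 0$. Substituting both facts into the definition of $\stab_t$ from \cref{thm:OMD} rewrites it as
\begin{align*}
\stab_t(x ; \eta) = \frac{2}{\eta}\, \E_{A \sim P_x}\left[\ip{x - \mdproj_{tc}(x,A),\, E_t(x,A) + c\,1_k} - \frac{\KL_F(\mdproj_{tc}(x,A), x)}{\eta}\right]\,,
\end{align*}
which is exactly the expression that opens the proof of the second part of \cref{lem:stab}, with $E_t$ replaced by $E_t + c\,1_k$ and $\mdproj_t, \md_t$ replaced by $\mdproj_{tc}, \md_{tc}$.

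From there I would replay the proof of \cref{lem:stab} essentially verbatim: the hypothesis $\md_{tc}(x,a) \in \interior(\dom F)$ together with its definition as the unconstrained minimiser gives the first-order condition $\eta(E_t(x,a) + c\,1_k) = \nabla F(x) - \nabla F(\md_{tc}(x,a))$; substituting and applying the three-point identity for Bregman divergences turns $\stab_t(x;\eta)$ into $\tfrac{2}{\eta^2}\E_{A\sim P_x}[\KL_F(x,\md_{tc}(x,A)) - \KL_F(\mdproj_{tc}(x,A),\md_{tc}(x,A))]$, and dropping the nonnegative second term yields the first inequality. For the second, Taylor's theorem plus Legendre duality (the same displayed chain as in \cref{lem:stab}) produces, for each $a$, a point $z \in [x,\md_{tc}(x,a)]$ with $\KL_F(x,\md_{tc}(x,a)) = \tfrac{\eta^2}{2}\norm{E_t(x,a)+c\,1_k}^2_{\nabla^{-2} F(z)}$; bounding by the supremum over the chord and taking $\E_{A\sim P_x}$ finishes it.

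The main thing to get right — and it is mild — is this first step: the whole argument relies on $\cX$ lying in a hyperplane on which $y\mapsto\ip{y,1_k}$ is constant, which is what simultaneously makes the constrained minimiser blind to the shift and makes $c\,1_k$ contribute nothing to the inner product in $\stab_t$; once that is in place the remainder is pure bookkeeping, under the standing assumptions of \cref{lem:stab} that $F$ is Legendre and twice differentiable on $\interior(\dom F)$. (The existence hypothesis written as ``$g_{tc}(x,a)$ exists'' should be read as ``$\md_{tc}(x,a)$ exists'', which is exactly what the Legendre-duality step needs.)
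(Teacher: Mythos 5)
Your proposal is correct and follows essentially the same route as the paper's proof: observe that on the probability simplex $\ip{y,c\,1_k}$ is constant so $\mdproj_{tc}=\mdproj_t$ and $\ip{x-\mdproj_{tc}(x,a),c\,1_k}=0$, rewrite $\stab_t$ in terms of the shifted estimator, and then replay the argument of \cref{lem:stab} with $E_t(x,a)+c\,1_k$, $\mdproj_{tc}$, $\md_{tc}$ in place of $E_t(x,a)$, $\mdproj_t$, $\md_t$. You also correctly read ``$g_{tc}$'' as the $\md_{tc}$ defined in the statement; the bookkeeping with the first-order condition, three-point identity, and Taylor/Legendre-duality step is exactly as in the paper.
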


\begin{proof}
Since $\cX$ is the probability simplex $\ip{y, c 1_k} = c$ for all $y \in \cX$. Therefore $\mdproj_{tc}(x, a) = \mdproj_{t}(x, a)$ and $\ip{x-\mdproj_t(x, a),c 1_k} = 0$.
Hence
\begin{align*}
\stab_t(x) 
&= \frac{2}{\eta} \E_{A \sim P_x} \left[\ip{x - \mdproj_t(x, A), E_t(x, A)} - \frac{\KL_F(\mdproj_t(x, A), x)}{\eta}\right]\\
&= \frac{2}{\eta} \E_{A \sim P_x} \left[\ip{x - \mdproj_{tc}(x, A), E_t(x, A) + c 1_k} - \frac{\KL_F(\mdproj_{tc}(x, A), x)}{\eta}\right].
\end{align*}
The remaining proof is analogous to the proof of \cref{lem:stab} substituting $f_t, g_t$ by $f_{tc}, g_{tc}$ and the loss $E_t(x, a)$ by $E_t(x, a) + c 1_k$.
\end{proof}

\section{Proof of \cref{cor:BOTH}}

Starting with the adversarial regret bound. By \cref{thm:OMD},
\begin{align*}
\Reg_n 
&\leq \frac{\diam_F(\cX)}{\eta} + \frac{\eta}{2} \E\left[\sum_{t=1}^n \stab_t(X_t)\right] 
\leq \frac{\diam_F(\cX)}{\eta} + \frac{\eta n \stab(\sA)}{2}\,.
\end{align*}
The first part follows by choosing
\begin{align*}
\eta = \sqrt{\frac{2 \diam_F(\cX)}{n \stab(\sA)}}\,.
\end{align*}
The Bayesian case follows from an identical argument and \cref{thm:bayes} and the fact that
\begin{align*}
\E\left[\sum_{t=1}^n \stab_t(X_t)\right]
\leq \E\left[\sum_{t=1}^n \stab(\sA)\right] 
\leq n \esssup(\stab(\sA))\,.
\end{align*}
The result claimed in \cref{rem:BOTH} follows similarly with the same choice of learning rate.

\section{Proof of \cref{thm:bandit}}

\begin{proof}[Proof of \cref{lem:bandit}]
We use \cref{lem:shift} with $c= -\frac{1}{2}$. 
As a reminder, we have 
\begin{align*}
    E_t(x, a)_i + c = \frac{(\ell_{ti}-c_{ti})\one{a = e_i}}{x_i}+c_{ti}+c\,,\mbox{ where }c_{ti} = \frac{1}{2}(1-\one{\oA_{ti}<\eta^2}.
\end{align*}
Let $\tilde\ell_t = E_t(X_t, A_t) +c 1_k$.
We start by calculating the Hessian of $F$. Since $F(a) = -\sum_{i=1}^k 2\sqrt{a_i}$, 
\begin{align*}
\nabla F(a) = -1/\sqrt{a} \qquad \text{and} \qquad \nabla^2 F(a) = \diag(a^{-3/2} / 2) \,.
\end{align*}
The next step is to bound $\md_{tc}(X_t,A_t)_i^\frac{3}{2}$. 
By definition
\begin{align*}
    \md_{tc}(X_t,A_t)=\argmin_{y \in\interior(\dom(F))} \eta \ip{y,\tilde\ell_t} + F(y)-F(\oA_{t})-\ip{y-\oA_t,\nabla F(\oA_t)}\,,
\end{align*}
which implies that $\eta \tilde\ell_t + \nabla F(\md_{tc}(X_t,A_t)) - \nabla F(\oA_t) = 0$.
Substituting the gradient of the potential shows that 
\begin{align*}
    \eta \tilde\ell_{ti} - \frac{1}{\sqrt{\md_{tc}(X_t,A_t)_i}} + \frac{1}{\sqrt{\oA_{ti}}} = 0 \,.
\end{align*} 
Solving for $\md_{tc}(X_t,A_t)_i$ yields 
\begin{align}
\md_{tc}(X_t,A_t)_i^\frac{3}{2} = \frac{\oA_{ti}^\frac{3}{2}}{(1+\tilde\ell_t\eta\oA_{ti}^{\frac{1}{2}})^3} \,.\label{eq:AB1}
\end{align}
For $\tilde \ell_{ti} \geq 0$, \cref{eq:AB1} directly implies $\md_{tc}(X_t,A_t)_i^\frac{3}{2} \leq \oA_{ti}^\frac{3}{2}$.
Let $\tilde{\ell}_{ti} < 0$, then we get the following lower bound by definition of $\tilde\ell_t$:
\begin{align*}
    &X_{ti}\geq \eta^2:\; \tilde\ell_{ti} = -\frac{(\ell_{ti}-1)\one{A_t=e_i}}{2\oA_{ti}} \geq -\frac{1}{2X_{ti}}\geq -\frac{1}{2\eta X_{ti}^{1/2}},\\
    &X_{ti}< \eta^2:\; \tilde\ell_{ti} = \frac{\ell_{ti}\one{A_t=e_i}}{\oA_{ti}}-\frac{1}{2}\geq -\frac{1}{2\eta X_{ti}^{1/2}} \geq -\frac{1}{2X_{ti}}.
\end{align*}
This directly implies $-\tilde\ell_{ti}\eta X^{1/2}_{ti} \leq \frac{1}{2}\eta X_{ti}^{-1/2}$ and $1+\tilde\eta X_{ti}^{1/2}\geq \frac{1}{2}$. 
Going back to \cref{eq:AB1}, the following bound on $f(x)=x^{-3}$ holds due to convexity for all $x>-1$: $f(1+x)\leq f(1)+xf'(1+x)$. 
Using all three inequalities provides the bound 
\begin{align*}
\oA_{ti}^\frac{3}{2}(1+\tilde\ell_{ti}\eta\oA_{ti}^{\frac{1}{2}})^{-3}\leq \oA_{ti}^\frac{3}{2}\left(1-3(1+\tilde\ell_{ti}\eta\oA_{ti}^{\frac{1}{2}})^{-4}\tilde\ell_{ti}\eta\oA_{ti}^{\frac{1}{2}}\right)
\leq \oA_{ti}^\frac{3}{2} + 24\eta\oA_{ti} \,.
\end{align*}
Hence for any $z\in[\oA_t,\md_{tc}(X_t,A_t)]$ we have
\begin{align*}
\nabla^{-2}F(z) &\preceq \diag( 2\oA_{t}^\frac{3}{2} + 48\eta\oA_{t}\circ\one{\tilde\ell_t<0}) \,,
\end{align*}
where $\one{\tilde\ell_t>0}$ is vector of element wise applied indicator function.
Finally we are ready to bound the stability:
\begin{align}
    &\E_{A \sim P_{X_t}}\left[\sup_{z \in [X_t,\md_{tc}(X_t, A)]} \norm{E_t(X_t, A)+c 1_k}^2_{\nabla^{-2} F(z)}\right]\nonumber\\
    &\leq \sum_{i:X_{ti}\geq \eta^2} \oA_{ti}\frac{(\ell_{ti}-\frac{1}{2})^2}{\oA_{ti}^2}(2\oA_{ti}^\frac{3}{2} + 48\eta\oA_{ti}) 
    + \sum_{i:X_{ti}< \eta^2} \frac{1}{2^2}(2\oA_{ti}^\frac{3}{2} + 48\eta\oA_{ti})+ \oA_{ti}\frac{\ell_{ti}^2}{\oA_{ti}^2}2\oA_{ti}^\frac{3}{2}\label{eq:AB2}\\
    &\leq \sum_{i:X_{ti}\geq \eta^2} \frac{\oA_{ti}^\frac{1}{2}}{2} + 12\eta + \sum_{i:X_{ti}<\eta^2}\frac{25\eta^3}{2}+2\eta\leq \frac{\sqrt{k}}{2} + 12\eta k\,.\label{eq:AB3}
\end{align}
\cref{eq:AB2} follows because for $X_{ti}\geq \eta^2$ the term $E_t(X_t,A)_i+c$ is non zero with probability $X_{ti}$, while for $X_{ti}<\eta^2$, $E_t(X_t,A)_i+c$ is either non positive and bounded by $-\frac{1}{2}$, or it is positive with probability lower or equal to $X_{ti}$. 
\cref{eq:AB3} uses the condition $X_{ti}\leq \eta$ in the second sum and the upper bound $\eta\leq 1/2$.
\end{proof}

\begin{proof}[Proof of \cref{thm:bandit}]
Combine \cref{lem:bandit} with \cref{thm:OMD,cor:BOTH,rem:BOTH}.
\end{proof}

\section{Proof of \cref{thm:graph}}

We make use of the following lemma.

\begin{lemma}[\citealt{ACDK15}]\label{lem:indNumber}
Let $p \in \Delta([k])$. Then 
\begin{align*}
\sum_{i=1}^k \frac{p_i}{\sum_{j\in \cN(i)}p_j}\leq 4\indN \log\left(\frac{4k}{\indN \min_i p_i}\right)\,.
\end{align*}
\end{lemma}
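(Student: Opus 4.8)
Write $\alpha = \indN$ and $\epsilon = \min_i p_i$, and abbreviate $P_i = \sum_{j \in \cN(i)} p_j$ for the neighbourhood weight of vertex $i$. Since $p \in \Delta([k])$ we have $P_i \le 1$, and (assuming each $\cN(i)$ is non-empty, as must be the case or the left-hand side is infinite) $P_i \ge \epsilon$. The plan is to bucket the vertices geometrically by the size of $P_i$, bound each bucket's contribution by $O(\alpha)$ via the independence number, and sum over buckets.

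First I would set $B_t = \{ i : 2^{-t} < P_i \le 2^{-t+1}\}$ for $t = 1, \dots, T$ with $T = \lceil \log_2(1/\epsilon) \rceil$; because $\epsilon \le P_i \le 1$ every vertex lies in exactly one bucket. Inside a fixed bucket two estimates are available. The crude one uses $P_i > 2^{-t}$ to get $\sum_{i \in B_t} p_i / P_i < 2^t \sum_{i \in B_t} p_i \le 2^t$. The refined one extracts a maximal independent set $W_t \subseteq B_t$ of the feedback graph, so $|W_t| \le \alpha$, and by maximality every $i \in B_t$ is dominated by some $w \in W_t$ with $P_w \le 2^{-t+1}$; charging the weight of each dominated vertex to the neighbourhood weight of its dominator gives $\sum_{i \in B_t} p_i / P_i < 2^t \sum_{w \in W_t} P_w \le 2^t \cdot \alpha \cdot 2^{-t+1} = 2\alpha$. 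Taking the minimum of the two bounds in each bucket, the low-index buckets (those with $2^t$ below a constant multiple of $\alpha$) contribute a geometric series of total $O(\alpha)$, while the remaining $O(\log(k/(\alpha\epsilon)))$ buckets each contribute $O(\alpha)$; adding these and tracking constants is what produces the explicit bound $4\alpha \log(4k/(\alpha\epsilon))$.

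The hard part will be the refined per-bucket estimate. It is immediate for an undirected feedback graph, where ``$i$ dominated by $w$'' forces $i \in \cN(w)$ and the dominated weights really do sum to at most $P_w$. For a directed graph, maximality of $W_t$ only produces an edge between $i$ and $w$ in one of the two orientations, and in the ``wrong'' orientation the weight of $i$ must instead be charged against $\sum_{u : w \in \cN(u)} p_u$, which no $P_w$ controls. Handling this wrong-way charge — organising the greedy extraction carefully and using that across the whole vertex set the total wrong-way weight is $O(\alpha)$ since $\sum_i p_i\,|\cN(i) \cap W| \le \alpha$ for any independent set $W$ — is the technical core of the lemma, due to \citeauthor{ACDK15} \cite{ACDK15}, and it is exactly what forces the extra factor of $k$ inside the logarithm. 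Everything after the per-bucket estimate, namely the geometric summation over $t$ and the constant bookkeeping needed to land on precisely $4\alpha\log(4k/(\alpha\epsilon))$, is routine.
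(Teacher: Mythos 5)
First, a point of reference: the paper does not prove this lemma at all — it is imported verbatim from \cite{ACDK15} (their Lemma~5), so there is no in-paper argument to compare yours against and your proposal must stand on its own.

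It does not, and the gap is the one you yourself flag. The skeleton is reasonable: bucket vertices geometrically by $P_i = \sum_{j\in\cN(i)}p_j$, use the crude bound $2^t$ on low buckets and a refined bound $O(\indN)$ on the remaining $O(\log(1/\min_i p_i))$ buckets; the arithmetic would indeed land near $4\indN\log(4k/(\indN\min_i p_i))$ \emph{if} the refined per-bucket estimate held. But that estimate rests on the charging inequality $\sum_{i\in B_t}p_i \le \sum_{w\in W_t}P_w$, which you only justify for undirected graphs, while the graphs here are genuinely directed (that is the whole point of strong observability via condition (a) rather than a self-loop). Your proposed repair does not repair it: the inequality $\sum_i p_i\,|\cN(i)\cap W|\le |W|\le\indN$ is vacuous, since $\sum_i p_i\le 1$ already gives it, and it bounds a total \emph{weight}, which controls nothing about $\sum_i p_i/P_i$ when the $P_i$ are small; moreover, in the wrong-way case $w\in\cN(i)$ you only learn $P_i\ge p_w$, which gives no way to charge $p_i$ against any $P_w$. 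Deferring exactly this step to \cite{ACDK15} is circular, because the lemma \emph{is} their result; stripped of the deferral, what remains is a proof of the undirected case only. For what it is worth, the actual argument in \cite{ACDK15} takes a different route: roughly, it discretizes the weights at scale $\min_i p_i$, replaces each vertex by a clique of copies so as to preserve the independence number, and then invokes a separate unweighted lemma bounding $\sum_i 1/(1+d_i^{\mathrm{in}})$ by $2\indN\log(1+k'/\indN)$ on the blown-up graph of $k'=O(k/\min_i p_i)$ vertices — which is where the $k/\min_i p_i$ inside the logarithm comes from, and where the directedness is handled by a greedy, Tur\'an-type peeling rather than by per-bucket domination. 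To complete your proof you would need to either supply that unweighted in-degree lemma and the reduction, or find a genuinely new way to make the per-bucket charging work for directed edges.
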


\begin{proof}[Proof of \cref{thm:graph}]
Starting from \cref{cor:BOTH} we need to bound the diameter and stability. 
\begin{align*}
    \diam_F(\cX) &\leq \frac{k^{1-\alpha}}{\alpha(1-\alpha)}= \frac{k^{\frac{1}{\log(k)}}\log(k)}{1-\frac{1}{\log(k)}} = \frac{e\log(k)}{1-\frac{1}{\log(k)}}\leq 2e\log(k)\,,
\end{align*}
where in the last inequality we used the assumption that $k \geq 8 > e^2$. Moving to the stability term. 
As a reminder we have  
\begin{align*}
E_t(X_t, A_t)_i = \frac{\ell_{ti}\one{A_t \in \cN(i)}}{\sum_{b\in\cN(i)}X_{tb}}\mbox{ for } i\in I_t 
\text{ and } E_t(X_t, A_t)_{i} = \frac{(\ell_{ti}-1)\one{A_t \neq i}}{1-X_{ti}} + 1 \mbox{ otherwise}\,
\end{align*} 
where $I_t = \{i\in[k] : i\not\in \cN(i) \mbox{ and }X_{ti}>1/2\}$.
The set $I_t$ is either empty or contains exactly one element, since the action set it the probability simplex.
As a slight abuse of notation, $I_t$ denotes either the (possible empty) set or the unique element within.
We use \cref{lem:shift} with 
\begin{align*}
c =  \one{I_t\neq\emptyset}\frac{(1-\ell_{tI_t})\one{a \in \cN(I_t)}}{1-X_{tI_t}} \geq 0 \,.
\end{align*}
The Hessian of $F$ is $\nabla F^2(x) = \diag(x^{\alpha-2})$.
The non-negativity of $E_t(X_t, A_t) +c 1_k$ ensures that $\md_t(\oA_t,A_t)_i \leq \oA_{ti}$ almost surely and hence
by the definition of the potential $\nabla^{-2} F(z) \preceq \nabla^{-2} F(\oA_t)$ for all $z \in [\oA_{t}, \md_t(\oA_t,A_t)]$,
\begin{align*}
    &\E_{A \sim P_{X_t}}\left[\sup_{z \in [X_t,\md_{tc}(X_t, A)]} \norm{E_t(X_t, A) + c 1_k}^2_{\nabla^{-2} F(z)}\right]\\
    &=\E_{A \sim P_{X_t}}\left[ \norm{E_t(X_t, A) + c 1_k}^2_{\nabla^{-2} F(X_t)}\right]\\
    &=\sum_{i\not\in I_t}\E_{A \sim P_{X_t}}\left[(E_t(X_t, A)_i + c)^2\nabla^{-2} F(X_t)_{ii}\right] + \one{I_t\neq\emptyset}\E_{A \sim P_{X_t}}[\nabla^{-2} F(X_t)_{I_t I_t}]\\
    &\leq 2\sum_{i\not\in I_t}\E_{A \sim P_{X_t}}\left[E_t(X_t, A)_i^2 X_{ti}^{2-\alpha}\right]+ 2\E_{A \sim P_{X_t}}[c^2]\sum_{i\not\in I_t}X_{ti}^{2-\alpha} + 1\, .
\end{align*}
We first bound the $c$ term
\begin{align*}
    2\E_{A \sim P_{X_t}}[c^2]\sum_{i\not\in I_t}X_{ti}^{2-\alpha} = 2\one{I_t\neq\emptyset}\sum_{i\not\in I_t}X_{ti}\left(\frac{1-\ell_{t I_t}}{\sum_{i\not\in I_t}X_{ti}}\right)^2\sum_{i\not\in I_t}X_{ti}^{2-\alpha} \leq 2.
\end{align*}
Then we bound the contribution of arms $i$ with $i\not \in\cN(i)$ and $i\not\in I_t$, which implies $X_{ti}\leq 1/2$
\begin{align*}
    2\E_{A \sim P_x}\left[\sum_{i: i\not\in\cN(i)\cup I_t} E_t(X_t, A)_i^2X_{ti}^{2-\alpha}\right] 
    = 2 \sum_{i: i\not\in\cN(i)\cup I_t} \frac{\ell_{ti}^2X_{ti}^{2-\alpha}}{1-X_{ti}}
    \leq 4\,.
\end{align*}
Finally we bound the remaining term
\begin{align*}
    2\E_{A \sim P_x}\left[\sum_{i: i\in\cN(i)} E_t(X_t, A)_i^2X_{ti}^{2-\alpha}\right] 
    \leq 2\sum_{i: i\in\cN(i)}\frac{\ell_{ti}^2X_{ti}^{2-\alpha}}{\sum_{j\in\cN(i)}X_{tj}} \leq 2\max_{a\in \Delta([k])} \sum_{i=1}^k \frac{a_{i}^{2-\alpha}}{\sum_{j\in\cN(i)}a_{j}}.
\end{align*}
We bound the max using \cref{lem:indNumber}:
\begin{align*}
\max_{a\in \Delta([k])} \sum_{i=1}^k \frac{a_{i}^{2-\alpha}}{\sum_{j\in\cN(i)}a_{j}}
&=\max_{a\in \Delta([k])}\sum_{i: a_{i}>\exp(-\log(k)^2)} \frac{a_{ti}^{2-\alpha}}{\sum_{j \in \cN(i)} a_{j}} +\sum_{i: a_{i}\leq \exp(-\log(k)^2)} \frac{a_{i}^{2-\alpha}}{\sum_{j \in \cN(i)} a_{j}} \\
&\leq 4\indN\log\left(\frac{4k\exp(\log(k)^2)}{\indN}\right) + k \exp(-\log(k)^{-1}\log(k)^2)\\
&= 4\indN\left(\log\left(\frac{4k}{\indN}\right)+\log(k)^2\right) + 1\,,
\end{align*}
where in the final inequality we used \cref{lem:indNumber} on the sub-graph $\{a : \oA_{ta} >\exp(-\log(k)^2)$ and noted the fact the independence number of a sub-graph of $\graph$ cannot
be larger than the independence number of $\graph$.
Combining everything, we have shown that
\begin{align*}
    \stab(\sA) \leq 8\indN\left(\log\left(\frac{4k}{\indN}\right)+\log(k)^2\right) + 9.
\end{align*}
The proof is completed by tuning the learning rate according to \cref{cor:BOTH}.
\end{proof}

\section{Proof of \cref{thm:linear}}

Remember that the potential is $F(x) = \sum_{i=1}^d h(x_i)$ where
\begin{align*}
h(x) = 
\begin{cases}
  \frac{d}{2}x^2 &\mbox{ if }|x|\leq d^\frac{1}{p-2}\\\frac{p-2}{p-1}d^\frac{p-1}{p-2}|x| + \frac{|x|^{p}}{p(p-1)}+\frac{2-p}{2p}d^\frac{p}{p-2}&\mbox{ otherwise\,.}
\end{cases}
\end{align*}
Before the proof we provide some intuition for this choice of the potential. 
By the problem setting for $q=\frac{p}{1-p}$, it holds that $\norm{\ell_t}_q,\norm{X_t}_p\leq 1$.
Assuming we have a `separable' potential $F(x)=\sum_{i=1}^d \tilde h(x_i)$, we can write the stability term as 
\begin{align*}
\norm{\ell_t}^2_{\nabla^{-2}F(z)} = \ip{\ell_t\circ\ell_t,(\tilde h''(z_i)^{-1})_{i=1,\dots,d}} \leq \norm{\ell_t\circ\ell_t}_{q'}\norm{(\tilde h''(z_i)^{-1})_{i=1,\dots,d}}_{p'}. 
\end{align*}
Choosing $q' = \frac{q}{2}, p'=\frac{q'}{q'-1}=\frac{p}{2-p}$, the first factor is bounded by 1 and setting $\tilde h''(z_i) = |z_i|^{p-2}$ ensures the second factor is bounded by 1. 
Unfortunately, this leads to the potential $\tilde h(x)=\frac{1}{p(p-1)}|x|^p$, whose diameter can be arbitrarily large.
To prevent the potential from exploding, we clip $h''(x)$ at $d$, as shown in \cref{fig:awesome}.
Any upper bound on the second derivative will serve the purpose of decreasing the diameter, however the threshold must be chosen such that the stability doesn't suffer too much.
The value $d$ happens to be the lowest value that keeps the stability dimension independent.

\begin{figure}[h!]
\centering
\begin{tikzpicture}
\begin{axis}[axis lines=middle,samples=200,ymin=0,ymax=500,width=0.5\linewidth, height=4cm,
    ytick={50},
    yticklabels={$d$},
    xtick=\empty]
\addplot[blue,domain=-0.1:-0.002] {1/(-x) };
\addplot[blue,domain=0.002:0.1] {1/(x)};
\draw[red,solid] (axis cs:-0.02,50) -- (axis cs:0.02,50);
\node[] at (15, 350)   (a) {$\tilde h''(x)$};
\end{axis}
\end{tikzpicture}
\begin{tikzpicture}
\begin{axis}[axis lines=middle,samples=200,ymin=0,ymax=20,width=0.5\linewidth, height=4cm,
    ytick={4.91222},
    yticklabels={$\log(d)$},
    yticklabel pos=right,
    clip=false,
    xtick=\empty,
    xticklabels={$0.1$,$0.05$, $0.0$}]
\addplot[blue,domain=0.01:0.05] {-ln(x) };
\addplot[blue,domain=0.0004:0.01] {-ln(x) };
\addplot[red,domain=0.0:0.02] {ln(50)+(0.02-x)*50+25*(-0.02-x)^2};
\draw[blue,solid] (axis cs:0.0004,-ln(0.0004) --(axis cs:0.0002,20);
\addplot[blue,domain=-0.05:-0.01] {-ln(-x) };
\addplot[blue,domain=-0.01:-0.0004] {-ln(-x) };
\addplot[red,domain=-0.02:0.0] {ln(50)+(0.02+x)*50+25*(0.02+x)^2};
\draw[blue,solid] (axis cs:-0.0004,-ln(0.0004) --(axis cs:-0.0002,20);
\node[] at (160, 140)   (a) {$\tilde h(1)-\tilde h(x)$};
\end{axis}
\end{tikzpicture}
\caption{$p=1$: $\tilde h''(x)$   and $\tilde h(1)-\tilde h(x)$ for $p=1$. Red lines indicate $h''$ and $h$ respectively.}
\label{fig:awesome}
\end{figure}
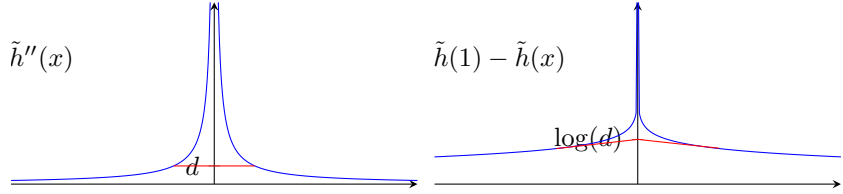

\begin{proof}[Proof of \cref{thm:linear}] 
By the definition of the loss estimator $\hat \ell_t = \ell_t$.
As usual, our plan is to bound the stability and diameter and then apply \cref{cor:BOTH}.

\paragraph{Bounding the stability}
By definition $h''(x) = \min\{|x|^{p-2},d\}$. Then by \cref{lem:stab} and the assumption that $E_t(x, a) = \ell_t$ for all $x$ and $a$, 
\begin{align}
    \stab_t(x ; \eta)
    &\leq \max_{z\in\cX}||\ell_t||^2_{\nabla F^{-2}(z)} \nonumber \\
    &\leq \max_{z\in\cX}\left(\sum_{i:|z_i|\geq d^\frac{1}{p-2}} \ell_{ti}^2 |z_i|^{2-p} +\sum_{i:|z_i|<  d^{\frac{1}{p-2}}}\frac{1}{d}\right)\nonumber\\
    &\leq \max_{z\in\cX}\left(\sum_{i=1}^d \ell_{ti}^2 |z_i|^{2-p} +1\right)\nonumber\\
    &\leq \max_{z\in\cX}\left(\left(\sum_{i=1}^d (\ell_{ti}^2)^{\frac{p}{2p-2}}\right)^{\frac{2p - 2}{p}} \left(\sum_{i=1}^d (|z_i|^{2-p})^{\frac{p}{2-p}}\right)^{\frac{2-p}{p}} + 1\right)\label{eq:L3}\\
    &= \left(\max_{z\in\cX}\norm{\ell_t}_{q}^2\norm{z}^{2-p}_p+1\right) \leq 2 \,, \nonumber
\end{align}
where \cref{eq:L3} follows from Cauchy-Schwarz.

\paragraph{Bounding the diameter}
First notice that $F(x) \geq 0$ for all $x \in \cX$ and $F(0) = 0$. Hence
\begin{align*}
\diam_F(\cX) = \max_{x \in \cX} F(x)\,.
\end{align*}
For arbitrary $x\in \cX$ define $J = \{i\in[d]|x_i \geq  d^\frac{1}{p-2}\}$, $I=[d]\setminus J$ and for any $S\subset [d]$ define the vector $x_S$ as the $|S|$-dimensional vector consisting of entries $(x_i)_{i\in S}$. 
Then it holds 
\begin{align*}
    F(x) = \frac{d}{2}\norm{x_I}_2^2 - \frac{2-p}{p-1}d^\frac{p-1}{p-2}\norm{x_J}_1 + \frac{\norm{x_J}_p^p}{p(p-1)} + \frac{2-p}{2p}d^\frac{p}{p-2}|J|.
\end{align*}
Maximizing this expression over $x_J$ under the constraints of keeping both the set $J$ and $\norm{x_J}_p$ constant is setting all but 1 coordinate in $x_J$ to $d^\frac{1}{p-2}$ and shifting all other weight towards a single entry.
This follows directly from the fact that $\norm{x}_p$ is convex, so the minimum of $\norm{x}_1$ under constant $\norm{x}_p$ is on the boundary.
The optimal $y\in\arg\max_{x\in \cX}F(x)$ can therefore only have a single coordinate $i$ such that $|y_i|>d^\frac{1}{p-2}$, which we assume without loss of generality is $i = 1$.
\begin{align*}
    F(y) = h(y_1) + \frac{d}{2} \sum_{i=2}^d y_i^2 \leq h(y_1) + \frac{d^2}{2}d^\frac{2}{p-2} \leq h(1) +\frac{1}{2} \,.
\end{align*}
It follows that \
\begin{align*}
    \diam_F(\cX) \leq h(1)+\frac{1}{2} = \frac{p-2}{p-1} d^\frac{p-1}{p-2}+\frac{1}{p(p-1)} + \frac{2-p}{2p}d^\frac{p}{p-2}+\frac{1}{2}\\
    = \frac{1-d^\frac{p-1}{p-2}}{p-1} + d^\frac{p-1}{p-2} -\frac{1}{p}+\frac{2-p}{2p}d^\frac{p}{p-2}+\frac{1}{2} \leq \frac{1-d^\frac{p-1}{p-2}}{p-1} +1.
\end{align*}
We immediately get the bound $\diam_T(\cX) \leq \frac{2}{p-1}$.
Let $p\leq \frac{3}{2}$, we substitute $z=\frac{p-1}{2-p}$ and get
\begin{align*}
    \diam_F(\cX) \leq \frac{1-d^{-z}}{(2-p)z} +1 \leq 2\frac{1-d^{-z}}{z}+1 \leq 2\log(d)+1,
\end{align*}
where we use the fact that for $z\geq 0$ the term $\frac{1-d^{-z}}{z}$ is monotonically decreasing in $z$ with limit $\log(d)$ for $z\rightarrow 0$.

We have shown that $\diam_F(\cX) \leq \cO(\min\{\frac{1}{p-1},\log(d)\})$ and $\stab(\sA)\leq \cO(1)$.
The proof is completed by tuning the learning rate according to \cref{cor:BOTH}.
\end{proof}

\fi

\end{document}